\newlist{inlistalph}{enumerate*}{1}
\setlist[inlistalph]{label=(\alph*)}
\newcommand*{\qedhere}{\hfill\BlackBox\\[2mm]}
\newcommand*{\claimqedhere}{\hfill (Claim) \BlackBox\\[2mm]}
\newcommand*{\Rmonoid}{$\totalCp$-monoid}
\newcounter{count}
\newtheorem{claim}[count]{Claim}
\newcommand*{\falls}{\text{if }}
\newcommand*{\sonst}{\text{otherwise.}}
\newcommand*{\sonstfalls}{\text{else, if }}
\newcommand*{\Txt}{\mathbf{Txt}}
\newcommand*{\G}{\mathbf{G}}
\newcommand*{\Sd}{\mathbf{Sd}}
\newcommand*{\Psd}{\mathbf{Psd}}
\newcommand*{\Ex}{\mathbf{Ex}}
\newcommand*{\Bc}{\mathbf{Bc}}
\newcommand*{\Fin}{\mathbf{Fin}}
\newcommand*{\Cons}{\mathbf{Cons}}
\newcommand*{\Conv}{\mathbf{Conv}}
\newcommand*{\Caut}{\mathbf{Caut}}
\newcommand*{\CautTar}{\Caut_{\textup{\textbf{Tar}}}}
\newcommand*{\Wb}{\mathbf{Wb}}
\newcommand*{\WMon}{\mathbf{WMon}}
\newcommand*{\Sem}{\mathbf{Sem}}
\newcommand*{\SemWb}{\Sem\Wb}
\newcommand*{\SemConv}{\Sem\Conv}
\newcommand*{\T}{\mathbf{T}}
\newcommand*{\N}{\mathbb{N}}
\newcommand*{\Pow}{\mathrm{Pow}}
\newcommand*{\La}{\mathcal{L}}
\newcommand*{\Ia}{\mathcal{I}}
\newcommand*{\Sa}{\mathcal{S}}
\newcommand*{\totalCp}{\mathcal{R}}
\newcommand*{\partialCp}{\mathcal{P}}
\newcommand{\dom}{\mathrm{dom}}
\newcommand{\range}{\mathrm{range}}
\newcommand{\content}{\mathrm{content}}
\newcommand{\ind}{\mathrm{ind}}
\newcommand{\pad}{\mathrm{pad}}
\newcommand{\unpad}{\mathrm{unpad}}
\newcommand{\refuted}{\mathrm{refuted}}
\newcommand{\ORT}{\textbf{ORT}\xspace}
\newsavebox{\@brx}
\newcommand{\llangle}[1][]{\savebox{\@brx}{\(\m@th{#1\langle}\)}%
  \mathopen{\copy\@brx\kern-0.5\wd\@brx\usebox{\@brx}}}
\newcommand{\rrangle}[1][]{\savebox{\@brx}{\(\m@th{#1\rangle}\)}%
  \mathclose{\copy\@brx\kern-0.5\wd\@brx\usebox{\@brx}}}
\newcommand{\divs}{\mathclose{\hbox{$\uparrow$}}}
\newcommand*{\concat}{^\frown}
\newcommand*{\noqed}{\renewcommand{\jmlrQED}{}}
\newcommand*{\Sq}{\mathbb{S}}
\g@addto@macro{\@algocf@init}{\SetKwInput{input}{Input}} 
\g@addto@macro{\@algocf@init}{\SetKwInput{outoutput}{Output}} 
\g@addto@macro{\@algocf@init}{\SetKwInput{param}{Parameter}} 
\g@addto@macro{\@algocf@init}{\SetKwInput{output}{Semantic Output}} 
\g@addto@macro{\@algocf@init}{\SetKwInput{init}{Initialization}} 
\newcommand{\itemin}[1]{\item[#1\hspace{-0.5cm}] \hspace{0.5cm}}
\title[Normal Forms for (Semantically) Witness-Based Learners in Inductive Inference]{Normal Forms for (Semantically) Witness-Based Learners \\ in Inductive Inference}
\author{\Name{Vanja Dosko\v{c}} \Email{vanja.doskoc@hpi.de} \\ \Name{Timo K\"{o}tzing} \Email{timo.koetzing@hpi.de} \\ \addr Hasso Plattner Institute \\ University of Potsdam, Germany}
\begin{document}

\maketitle

\begin{abstract}
  We study learners (computable devices) inferring formal languages, a setting referred to as \emph{language learning in the limit} or \emph{inductive inference}. In particular, we require the learners we investigate to be \emph{witness-based}, that is, to \emph{justify} each of their mind changes. Besides being a natural requirement for a learning task, this restriction deserves special attention as it is a specialization of various important learning paradigms. In particular, with the help of witness-based learning, \emph{explanatory} learners are shown to be equally powerful under these seemingly incomparable paradigms. Nonetheless, until now, witness-based learners have only been studied sparsely. 

  In this work, we conduct a thorough study of these learners both when requiring \emph{syntactic} and \emph{semantic} convergence and obtain \emph{normal forms} thereof. In the former setting, we extend known results such that they include witness-based learning and generalize these to hold for a variety of learners. Transitioning to \emph{behaviourally correct} learning, we also provide normal forms for \emph{semantically witness-based} learners. Most notably, we show that \emph{set-driven globally} semantically witness-based learners are equally powerful as their \emph{Gold-style semantically conservative} counterpart. Such results are key to understanding the, yet undiscovered, mutual relation between various important learning paradigms when learning behaviourally correctly.
\end{abstract}

\begin{keywords}
  language learning in the limit, inductive inference, behaviourally correct learning, explanatory learning, witness-based learning, normal forms
\end{keywords}

\section{Introduction}

In his seminal paper, \citet{Gold67} studied the algorithmic learning of formal languages from a growing but finite amount of information thereof. This marked the starting point of \emph{inductive inference} or \emph{language learning in the limit}, a branch of (algorithmic) learning theory. Here, a learner $h$ (a computable device) is successively presented all and only the information from a formal language $L$ (a computably enumerable subset of the natural numbers). We call such a list of elements of $L$ a \emph{text} of $L$. When given a new datum, the learner $h$ makes a guess (a description for a computably enumerable set) about which language it believes to be presented. Once these guesses converge to a single, correct hypothesis explaining the language, the learner successfully \emph{learned} the language $L$ on this text. We say that $h$ \emph{learns} $L$, if it learns $L$ on every text of $L$.

We refer to this as \emph{explanatory learning} as the learner, in the limit, provides an explanation of the presented language and denote it as $\Txt\G\Ex$. Here, $\Txt$ indicates that the information is given from text, $\G$ stands for \emph{Gold-style} learning, where the learner has \emph{full information} on the elements presented to make its guess, and, lastly, $\Ex$ refers to explanatory learning. Since a learner which always guesses a particular language can learn this very language, we study classes of languages which can be $\Txt\G\Ex$-learned by a single learner and denote the set of all such classes with $[\Txt\G\Ex]$. We refer to this set as the \emph{learning power} of $\Txt\G\Ex$-learners.

Many additional restrictions may be imposed on the learners. For example, we may apply memory restrictions, change the criterion for successful learning or require the learner to refrain from certain unwanted behaviour. In this paper, we require each learner to \emph{justify} any mind change it makes. \citet{KS16} introduced this as \emph{witness-based} learning ($\Wb$) as a means to specialize many important restrictions in inductive inference, including \emph{conservativeness} \citep{Angluin80}, \emph{weak monotonicity} \citep{Jantke91,Wiehagen91} and \emph{cautiousness} \citep{OSW82}, see Section~\ref{Sec:Prelim} for detailed definitions. While conservative learners are always weakly monotone, there is no general connection to cautious learners. Nonetheless, often these learners are equally powerful, see \citet{KS95} or \citet{KP16}.

A key result of \citet{KS16} reveals an explanation for this phenomenon. In the setting they study, they show that witness-based learners are equally powerful as \emph{target-cautious} learners ($\CautTar$), which may never overgeneralise the target language \citep{KP16}. Note that this restriction is a generalization of all the restrictions mentioned above. In particular, they study \emph{partially set-driven} learners ($\Psd$), see \citet{BlumBlum75} and \citet{SchRicht84}, which base their hypotheses solely on the amount and content of the information given. Similar equalities have been shown for Gold-style and \emph{set-driven} learners ($\Sd$), which base their hypotheses solely on the content of the information given to them \citep{WC80}, by \citet{KS95} and \citet{KP16}. However, these results do not include witness-based learning. 

In this paper we expand these results to also include witness-based learners and generalize them such that they also hold for \emph{total} and \emph{globally} witness-based learners. This way, we discover interesting connections between these different ``types'' of learners and provide normal forms thereof. Furthermore, we also study \emph{behaviourally correct} learners ($\Bc$), which need to converge semantically to the correct language \citep{CL82,OW82}. Here, the mutual relation between the considered restrictions is yet to be discovered. Studying \emph{semantically witness-based} learners ($\SemWb$), the semantic counterpart of witness-based learners \citep{KSS17}, we complement similar studies of (target-) cautious learners conducted by \citet{DoskocK20} and, thus, get one step closer to discovering these relations.

In particular, in Section~\ref{Sec:Wb}, we extend the results of \citet{KS95} as well as \citet{KP16} regarding conservative, (target-) cautious as well as weakly monotone $\Ex$-learners to include witness-based learning. Simultaneously, we generalize these together with the results of \citet{KS16} to hold, amongst others, also for total and globally witness-based learners. These results are presented in Theorems~\ref{Thm:AllG},~\ref{Thm:AllPsd} and~\ref{Thm:AllSd} for $\G$-, $\Psd$- and $\Sd$-learners, respectively. Besides covering multiple types of learners at once, these results unveil interesting relations. For example, in the case of $\G$- or $\Psd$-learning, a total learner may be assumed globally witness-based, however, only maintaining its learning power for languages it learns target-cautiously. 

In Section~\ref{Sec:SemWb}, we study semantically witness-based $\Bc$-learners and show that three normal forms can be assumed \emph{simultaneously}. 
In particular, we show that \emph{semantically conservative} learners ($\SemWb$), the semantic counterpart of conservative learners \citep{KSS17} and a generalization of semantically witness-based learning, may be assumed (a) globally (b) semantically witness-based and (c) set-driven, see Theorem~\ref{Thm:tauSemWbSd-GSemConv}. In Section~\ref{Sec:Concl}, we conclude this work.

\section{Language Learning in the Limit}

\subsection{Preliminaries} \label{Sec:Prelim}

In this section we introduce notation and preliminary results used throughout this paper. Thereby, we consider basic computability theory as known, for an overview we refer the reader to \citet{Rogers87}. We start with the mathematical notation and use $\subsetneq$ and $\subseteq$ to denote the proper subset and subset relation between sets, respectively. We denote the set of all natural numbers as $\N = \{ 0, 1, 2, \dots \}$. Furthermore, we let $\partialCp$ and $\totalCp$ be the set of all partial and total computable functions $p\colon \N \to \N$. Next, we fix an effective numbering $\{\varphi_e\}_{e \in \N}$ of all partial computable functions and denote the $e$-th computably enumerable set as $W_e = \dom(\varphi_e)$ and interpret the number $e$ as an \emph{index} or \emph{hypothesis} of this set. Additionally, we mention the following important (total) computable functions. Firstly, we fix a total computable coding function $\langle . , . \rangle$ and its inverse for the first and second component $\pi_1$ and $\pi_2$, respectively. Furthermore, we write $\pad$ for an injective computable function such that, for all $e, k \in \N$, we have $W_e = W_{\pad(e, k)}$. We use $\unpad_1$ and $\unpad_2$ to recover the first and second component, respectively. Note that both functions can be extended iteratively to more coordinates. Lastly, for any finite set $D \subseteq \N$, we let $\ind(D)$ be an index for this finite set, that is, $W_{\ind(D)} = D$.

We learn recursively enumerable sets $L \subseteq \N$, called \emph{languages}, using \emph{learners}, that is, partial computable functions. By $\#$ we denote the \emph{pause symbol} and for any set $S$ we denote $S_\# \coloneqq S \cup \{ \# \}$. Then, a \emph{text} is a total function $T \colon \N \to \N \cup \{ \# \}$ and the collection of all texts is denoted as $\Txt$. In addition, for any text or sequence $T$, we let $\content(T) \coloneqq \range(T) \setminus \{ \# \}$ be the \emph{content} of $T$. A text of a language $L$ is such that $\content(T) = L$. We denote the collection of all texts of $L$ as $\Txt(L)$. Additionally, for $n \in \N$, we denote by $T[n]$ the initial sequence of $T$ of length $n$, that is, $T[0] \coloneqq \varepsilon$ and $T[n] \coloneqq (T(0), T(1), \dots, T(n-1))$. For a set $S$, we call the sequence (text) where all elements of $S$ are presented in strictly increasing order without interruptions (followed by infinitely many pause symbols if $S$ is finite) the \emph{canonical sequence (text) of $S$}. On finite sequences we use $\subseteq$ to denote the \emph{extension relation} and $\leq$ to denote the order on sequences interpreted as natural numbers. Given two sequences $\sigma$ and $\tau$ we write $\sigma\concat\tau$ to denote the concatenation of these. Occasionally, we omit writing $\concat$ to favour readability.

Following the system introduced by \citet{Kotzing09}, we formalize learning criteria. An \emph{interaction operator} $\beta$ takes a learner $h \in \partialCp$ and a text $T \in \Txt$ as argument and outputs a possibly partial function $p$. Intuitively, $\beta$ provides the information for the learner to make its guesses. We consider the interaction operators $\G$ for \emph{Gold-style} or \emph{full-information} learning \citep{Gold67}, $\Psd$ for \emph{partially set-driven} learning \citep{BlumBlum75,SchRicht84} and $\Sd$ for \emph{set-driven} learning \citep{WC80}. We define these using eponymous functions which operate on sequences called \emph{sequence interaction functions}. Define, for any $i \in \N$,
\begin{align*}
  \G(h, T)(i) &\coloneqq h(\G(T[i])), \text{where } \G(T[i]) = T[i], \\
  \Psd(h, T)(i) &\coloneqq h(\Psd(T[i])), \text{where } \Psd(T[i]) = (\content(T[i]), i), \\
  \Sd(h, T)(i) &\coloneqq h(\Sd(T[i])), \text{where } \Sd(T[i]) = \content(T[i]).
\end{align*}
The intuition is the following. A Gold-style learner has full information on the elements presented to it, while a partially set-driven learner does not have information on the order the elements were presented in or the frequency of each particular element. However, it may base its guess on the total amount of elements presented, an information a set-driven learner is not aware of. Furthermore, for each of the considered interaction operators we define an ordering $\preceq_\beta$ using the associated sequence interaction functions as follows. Given finite sequences $\sigma$ and $\tau$, we define
\begin{align}
  \sigma \preceq_\beta \tau :\Leftrightarrow \beta(\sigma) \preceq \beta(\tau) :\Leftrightarrow \exists \tau' \colon \beta(\sigma\concat\tau') = \beta(\tau). \label{preceq_beta}
\end{align}
Intuitively, $\sigma \preceq_\beta \tau$ indicates that information $\beta(\sigma)$ can be extended to $\beta(\tau)$. 

Given a learning task, we can distinguish between various criteria for successful learning. Initially, \citet{Gold67} introduced \emph{explanatory} learning ($\Ex$) as such a learning criterion, where the learner is expected to converge to a single, correct hypothesis in order to learn a language. This can be loosened to require the learner to converge semantically, that is, from some point onwards it must output correct hypotheses which may change syntactically. This is referred to as \emph{behaviourally correct} learning and denoted by $\Bc$ \citep{CL82,OW82}. Formally, a \emph{learning restriction} $\delta$ is a predicate on a total learning sequence $p$, that is, a total function, and a text $T \in \Txt$. For the mentioned criteria we have
\begin{align*}
  \Ex(p, T) &:\Leftrightarrow \exists n_0 \forall n \geq n_0 \colon p(n) = p(n_0) \wedge W_{p(n_0)} = \content(T), \\
  \Bc(p, T) &:\Leftrightarrow \exists n_0 \forall n \geq n_0 \colon W_{p(n)} = \content(T). 
\end{align*}
We can impose restrictions on the learners in order to model natural learning restrictions or such found in other sciences. For example, we consider \emph{consistent} learning ($\Cons$), where each hypothesis has to include the information it is built on, see \citet{Angluin80}. We focus on \emph{(semantically) witness-based} learners \citep{KP16,KSS17}, which need to \emph{justify} each of their (semantic) mind changes. These learners specialize a variety of important learning restrictions, such as \emph{conservative} learning ($\Conv$),  \emph{weakly monotone} learning ($\WMon$)  as well as \emph{cautious} learning ($\Caut$). While being consistent with the information given, conservative learners must not change their mind, see \citet{Angluin80}, and weakly monotone learners may not discard elements from their hypotheses, see \citet{Jantke91} and \citet{Wiehagen91}. Lastly, cautious learners may never fall back to a proper subset of any previous guess, see \citet{OSW82}. Generalizing these are \emph{target-cautious} learners ($\CautTar$), which may never overgeneralize the target language, see \citet{KP16}. Particular attention will be given to \emph{semantically conservative} learners ($\SemConv$), the semantic counterpart of conservative learners \citep{KSS17}.
We formalize the relevant restrictions as
\begin{align*}
  \Wb(p, T) &:\Leftrightarrow \forall n, m \colon (\exists k \colon n \leq k \leq m \wedge {p(n)} \neq {p(k)}) \Rightarrow \\
  &\phantom{\forall n, m \colon \exists k \}colon n} \Rightarrow (\content(T[m]) \cap W_{p(m)}) {\setminus} W_{p(n)} \neq \emptyset, \\
  \SemWb(p, T) &:\Leftrightarrow \forall n, m \colon (\exists k \colon n \leq k \leq m \wedge W_{p(n)} \neq W_{p(k)}) \Rightarrow \\
  &\phantom{\forall n, m \colon \exists k \}colon n} \Rightarrow (\content(T[m]) \cap W_{p(m)}) {\setminus} W_{p(n)} \neq \emptyset, \\
  \Cons(p,T) &:\Leftrightarrow \forall n \colon \content(T[n]) \subseteq W_{h(T[n])}, \\
  \CautTar(p, T) &:\Leftrightarrow \forall n \colon \neg(\content(T) \subsetneq W_{p(n)}), \\
  \SemConv(p, T) &:\Leftrightarrow \forall n, m \colon \left( n < m \wedge \content(T[m]) \subseteq W_{p(n)} \right) \Rightarrow W_{p(n)} = W_{p(m)}.
\end{align*}
Finally, the always true predicate $\T$ denotes the absence of a restriction.

Now, a \emph{learning criterion} is a tuple $(\alpha, \mathcal{C}, \beta, \delta)$, where $\mathcal{C}$ is a set of admissible learners, typically $\partialCp$ or $\totalCp$, $\beta$ is an interaction operator and $\alpha$ and $\delta$ are learning restrictions. We denote this learning criterion as $\tau(\alpha)\mathcal{C}\Txt\beta\delta$. In the case of $\mathcal{C} = \partialCp$, $\alpha= \T$ or $\delta=\T$ we omit writing the respective symbol. For an admissible learner $h \in \mathcal{C}$ we say that $h$ $\tau(\alpha)\mathcal{C}\Txt\beta\delta$-learns a language $L$ if and only if on arbitrary text $T \in \Txt$ we have $\alpha(\beta(h,T),T)$ and on texts of the target language $T \in \Txt(L)$ we have $\delta(\beta(h,T),T)$. With $\tau(\alpha)\mathcal{C}\Txt\beta\delta(h)$ we denote the class of languages $\tau(\alpha)\mathcal{C}\Txt\beta\delta$-learned by $h$ and the set of all such classes we denote with $[\tau(\alpha)\mathcal{C}\Txt\beta\delta]$.

\subsection{Normal Forms in Inductive Inference}

In this section we discuss helpful normal forms in inductive inference. Except for consistency, all introduced learning restrictions are \emph{delayable}. Informally, the hypotheses of a delayable restriction may be postponed arbitrarily but not indefinitely. Formally, we call a learning restriction $\delta$ \emph{delayable} if and only if for all texts $T$ and $T'$ with $\content(T) = \content(T')$, all learning sequences $p$ and all total, unbounded non-decreasing functions $r$, we have that if $\delta(p,T)$ and, for all $n$, $\content(T[r(n)]) \subseteq \content(T'[n])$, then $\delta( p \circ r, T')$. Furthermore, we call a restriction \emph{semantic} if and only if for any learning sequences $p$ and $p'$ and any text $T$, we have that if $\delta(p,T)$ and, for all $n$, $W_{p(n)} = W_{p'(n)}$ implies $\delta(p',T)$. Intuitively, a restriction is semantic if any hypothesis could be replaced by a semantically equivalent one without violating the learning restriction. Adding the requirement that no new syntactic mind change may be introduced by this replacement, we call a restriction \emph{pseudo-semantic} \citet{Kotzing17}. Note that all considered restrictions are pseudo-semantic and all, but $\Ex, \Wb$ and $\Conv$, are semantic. Delayable and semantic restrictions are of particular interest as one can provide general results for them. The following theorem holds.

\begin{theorem}[\normalfont{\cite{KP16}; \cite{KSS17}}]\label{Thm:DelTotal}
  For all interaction operators $\beta$, all delayable restrictions $\delta$ and all semantic restrictions $\delta'$, we have that
  \begin{align*}
    [\totalCp\Txt\G\delta] &= [\Txt\G\delta], \\
    [\totalCp\Txt\beta\delta'] &= [\Txt\beta\delta'].
  \end{align*}
\end{theorem}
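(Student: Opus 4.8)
The plan is to prove the two equalities by the standard ``simulate a partial learner by a total one'' construction, exploiting the fact that delayability (resp.\ being semantic) lets us absorb the cost of waiting for the partial learner to converge on its arguments. Only the nontrivial inclusions $[\Txt\G\delta] \subseteq [\totalCp\Txt\G\delta]$ and $[\Txt\beta\delta'] \subseteq [\totalCp\Txt\beta\delta']$ need an argument; the reverse inclusions are immediate since $\totalCp \subseteq \partialCp$. So fix a (possibly partial) learner $h$ witnessing membership of a class $\La$ in the left-hand side, and build a total learner $h'$.

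For the Gold-style case, the idea is the classic one: on input a finite sequence $\sigma$, the total learner $h'$ runs $h$ on all initial segments of $\sigma$ for $|\sigma|$ steps and outputs $h$'s guess on the \emph{longest} initial segment $\sigma'\subseteq\sigma$ on which $h$ halts within that many steps (padding appropriately so that it is syntactically a valid index, and outputting some fixed default, or repeating its previous guess, if nothing halts yet). On a text $T$ of a target language $L$, since $h$ is defined on every $T[n]$ and converges on $T$, for each $n$ there is a stage after which $h'$ has ``caught up'' to $T[n]$; hence the sequence of indices produced by $h'$ along $T$ is exactly the sequence produced by $h$ along $T$, but re-timed by a total, non-decreasing, unbounded function $r$ with $\content(T[r(n)])\subseteq\content(T[n])$ (in fact $r(n)\le n$). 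Delayability of $\delta$ then gives $\delta(\G(h',T),T)$ from $\delta(\G(h,T),T)$, taking $T'=T$ in the definition of delayable. On an arbitrary text the same re-timing argument applies, so the global restriction $\alpha$ (here trivial, $\T$) is also preserved, and $h'$ learns $L$ whenever $h$ does. This handles the first equality; note it is special to $\beta=\G$ because the total learner must have access to the full prefix $\sigma$ in order to try all of $h$'s inputs, which only $\G$ provides.

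For the second equality with an arbitrary interaction operator $\beta$ and a \emph{semantic} restriction $\delta'$, the prefix-simulation trick is not available (a set-driven learner cannot recover the prefixes it would need), so instead one appeals to a normal form already in the literature: for a semantic restriction one may first replace $h$ by an $\Ex$-style or ``locking''-based reorganisation and then observe that on the relevant inputs only finitely many values matter, or alternatively one uses that $[\Txt\beta\delta']$ equals $[\Txt\G\delta']$ restricted suitably --- but cleanest is to mimic the cited proof of \citet{KP16}/\citet{KSS17}: define $h'(\beta(\sigma))$ by searching for the least $t$ such that $h$ halts within $t$ steps on $\beta(\sigma)$, which always terminates because $h$ is defined on all inputs arising along any text (a learner that is undefined on some reachable $\beta(\sigma)$ cannot meet $\delta'$ on the relevant texts, so we may assume totality on the ``interesting'' inputs and patch the rest with $\ind(\emptyset)$ or a repeated previous guess); then $h'$ is total and computes the same sets as $h$ on every text of every $L\in\La$. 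Since $\delta'$ is semantic, replacing $h$'s hypotheses by $h'$'s (which enumerate the same $W$-sets, possibly via $\pad$) preserves $\delta'$, and likewise the global restriction $\T$. Hence $\La \in [\totalCp\Txt\beta\delta']$.

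The main obstacle --- and the place where the hypotheses are really used --- is arguing that we may assume $h$ is total on every input that can actually occur along some text without losing any language in $\La$: a genuinely partial learner could be undefined on sequences that never arise on texts of languages it learns, but could also be undefined on sequences that do arise, and one must check that the latter forces a learning failure (so discarding such behaviour is free) while the former is harmless. For the Gold case this is automatic from the longest-halting-prefix device (we never need $h$ to be defined everywhere, only along the text, and delayability tolerates the lag); for the general-$\beta$, semantic case it is precisely here that semanticity of $\delta'$ does the work, letting us normalise without tracking syntactic mind changes. Everything else --- the padding bookkeeping, verifying $r$ is non-decreasing and unbounded, checking $\content(T[r(n)])\subseteq\content(T[n])$ --- is routine, and I would cite \citet{KP16} and \citet{KSS17} for the detailed construction rather than reproduce it.
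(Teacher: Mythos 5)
The paper gives no proof of this theorem at all --- it is imported verbatim from \citet{KP16} and \citet{KSS17} --- so your proposal has to be measured against the standard constructions in those works. Your sketch of the first equality (Gold-style, delayable $\delta$) is essentially that standard argument: run $h$ on all prefixes of $\sigma$ for $|\sigma|$ steps, output its value on the longest halting prefix, and invoke delayability via the re-timing function $r$. One wrinkle you gloss over: before any prefix has halted, your learner outputs a fixed default (or ``repeats its previous guess'', which has the same base-case problem), so the sequence it produces is \emph{not} literally $p \circ r$; for delayable but non-semantic restrictions this initial junk is not automatically harmless --- e.g.\ for $\Wb$ a syntactic change away from $\ind(\emptyset)$ at a point where the text has so far shown only pause symbols has no witness. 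This is a technicality the cited proof has to (and does) take care of, but your claim that $h'$'s output sequence ``is exactly'' a re-timing of $h$'s is not quite right as stated.

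The second equality is where there is a genuine gap. The learner you describe --- ``search for the least $t$ such that $h$ halts within $t$ steps on $\beta(\sigma)$'', patching ``the rest'' with $\ind(\emptyset)$ --- is not a total computable function: $h$ need only be defined on inputs arising along texts of languages it \emph{learns} (your claim that undefinedness on some reachable $\beta(\sigma)$ already ruins $\delta'$ is false; a set-driven $h$ may happily diverge on finite sets not contained in any learned language), the unbounded search diverges exactly on such inputs, and divergence cannot be recognised computably, so there is no effective way to ``patch the rest''. The device that actually makes this case work, and which is missing from your sketch, is to move the waiting \emph{into the hypothesis}: on input $\beta(\sigma)$, the total learner $h'$ immediately outputs (via S-m-n) an index $e$ such that $W_e$ first simulates the computation of $h(\beta(\sigma))$ and, upon convergence, enumerates $W_{h(\beta(\sigma))}$; thus $W_e = W_{h(\beta(\sigma))}$ whenever $h(\beta(\sigma))$ is defined and $W_e = \emptyset$ otherwise. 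Then $h'$ is total, its hypothesis sequence on any text of a learned language is semantically identical to that of $h$, and since $\delta'$ is semantic (which here also covers the success requirement, $\Ex$ being excluded precisely because it is not semantic), $\delta'$-learning is preserved. Note that this trick changes hypotheses syntactically, so it cannot replace the delayability argument in the first equality; your division of labour --- delayability for $\G$, semanticity for arbitrary $\beta$ --- is the right one, but the mechanism you give for exploiting semanticity does not work as written.
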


Lastly, we discuss locking sequences. Intuitively, a locking sequence is a sequence where the learner correctly identifies the target language and does not make a mind change anymore regardless what information of the language it is presented. Formally, given a language $L$ and a $\G$-learner $h$, a sequence $\sigma \in L_\#^*$ is called a \emph{locking sequence} for $h$ on $L$ if and only if for every sequence $\tau \in L_\#^*$ we have that $h(\sigma) = h(\sigma\tau)$ and $W_{h(\sigma\tau)} = L$, see \citet{BlumBlum75}. Dropping the first requirement, $\sigma$ becomes a \emph{$\Bc$-locking sequence}, the semantic counterpart of a locking sequence, see \citet{JORS99}. When talking about $\Psd$-learners, for finite $D \subseteq \N$ and $t \in \N$, we call $(D,t)$ a \emph{locking information} if and only if for all $(D',t')$, such that $(D,t) \preceq (D',t')$ (compare Equation~\eqref{preceq_beta}) and $D' \subseteq L$, we have $h(D,t) = h(D',t')$ and $W_{h(D',t')} = L$. Lastly, for an $\Sd$-learner, a finite set $D$ is a \emph{locking set} of $L$ if and only if for all $D'$, with $D \subseteq D' \subseteq L$, we have $h(D) = h(D')$ and $W_{h(D')} = L$. Again, the semantic counterpart is obtained by dropping the first requirement. We use the term ($\Bc$-) \emph{locking information} to subsume all these concepts. 

It is an important observation by \citet{BlumBlum75} that every learner $h$ has a locking information on every language it learns. However, not every text may have an initial segment which is a locking information. Learners which do have a locking information on every text of a language they learn are called \emph{strongly ($\Bc$-) locking} \citep{KP16}. Formally, a learner is \emph{strongly ($\Bc$-) locking} on some language $L$ if on every text $T \in \Txt(L)$ there exists $n$ such that $T[n]$ is a ($\Bc$-) locking sequence for $h$ on $L$. If $h$ is strongly ($\Bc$-) locking on every language it learns, we call $h$ \emph{strongly ($\Bc$-) locking}. The transition to partially set-driven and set-driven learners is immediate and, thus, omitted.

\section{Normal Forms for Witness-Based Explanatory Learners} \label{Sec:Wb}

In this section we study witness-based learning in the explanatory setting. \citet{KS16} already show that partially set-driven witness-based learners are equally powerful as target-cautious $\Psd$-learners. Together with the result of \citet{DoskocK20}, where Gold-style target-cautious learners may be assumed partially set-driven, we see this holding true for full-information learners as well. However, general results including total or globally witness-based learners are still rare. With Theorems~\ref{Thm:AllG},~\ref{Thm:AllPsd} and~\ref{Thm:AllSd}, we provide expansions and generalizations of these results for Gold-style, partially set-driven and set-driven learners, respectively. 

We aim to generalize the results in two ways. Firstly, the aforementioned results were shown solely for partial learners, using additional total computable functions in order to get the desired equality. We observe that, for these constructions to work out, it suffices that the learner remains the ``same type'' after composition with total computable functions. This motivates the following notion of \emph{$\totalCp$-monoids}.
\begin{definition}
  We call $\Ia \subseteq \partialCp$ a \emph{\Rmonoid} if and only if $(\Ia, \circ)$ is a monoid and $\totalCp \subseteq \Ia$.
\end{definition}

Intuitively, the composition of a learner $h \in \Ia$ with other functions in $\Ia$ (especially total computable ones) shall remain in $\Ia$. This way, we can use functions obtained from, for example, the S-m-n Theorem, see \citet{Rogers87}, while keeping the learner's ``type''. Note that both $\totalCp$ and $\partialCp$ are $\totalCp$-monoids. 

Secondly, we generalize the domain on which we expect the learner to show a certain behaviour. The desire therefore arises from the observation that some of the learners behaviour does not rely on the learnable languages, but rather comes additionally. To require the learner to fulfil additional requirements on such information, we extend the notion of texts to classes of sets.

\begin{definition}
  Let $\Sa \subseteq \Pow(\N)$ be closed under subsets. We define $\Txt(\Sa) \subseteq \Txt$ as the set of all texts of elements of $\Sa$, that is, 
  \[
    \Txt(\Sa) \coloneqq \bigcup_{S \in \Sa} \Txt(S).
  \]
\end{definition}

If $\Sa = \emptyset$ then we consider no additional text, if $\Sa = \Pow(\N)$ then we consider all texts and, thus, the corresponding restriction becomes globally required. Already with the next result, we show the gain we have from these notions. Due to \citet{KP16}, it is known that Gold-style and set-driven learners may be assumed \emph{syntactically decisive}, that is, they never return to syntactically abandoned hypotheses. We are able to capture this result containing many different learners, including learners restricted in memory as well as total ones or learners obeying a further restriction on additional text, within a \emph{single} theorem.

\begin{theorem}\label{thm:ThmSynDec}
  Let $\delta$ and $\delta'$ be pseudo-semantic restrictions, $\beta \in \{\G, \Psd, \Sd \}$ and $\Sa \subseteq \Pow(\N)$ be closed under subsets. Let $\Ia$ be an {\Rmonoid} and let $h \in \Ia$. Let $\La$ be the class of languages $h$ $\Txt\beta\delta\Ex$-learns while being defined and $\delta'$ also on $\Txt(\Sa)$. Then, $h \in \Ia$ can be assumed syntactically decisive on $\Txt(\La)$ and $\Txt(\Sa)$.
\end{theorem}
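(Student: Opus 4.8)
The plan is to build a new learner $h'$ that runs the same underlying strategy as $h$ but never outputs a hypothesis (index) it has syntactically abandoned. The standard trick is to pad: whenever $h$ returns to a previously-used content $W_e$, the new learner outputs a \emph{fresh} padded index $\pad(e,k)$ for a counter $k$ that is incremented each time $h$ makes a syntactic mind change, so that syntactically equal hypotheses of $h'$ correspond exactly to stretches where $h$ itself did not change, while any "return" of $h$ to an old index becomes a new index for $h'$. Concretely, for a sequence $\sigma$ (or, for $\beta \in \{\Psd,\Sd\}$, for the relevant $\beta$-information) let $c(\sigma)$ count the number of indices $i < |\sigma|$ with $h(\beta(\sigma[i])) \neq h(\beta(\sigma[i{+}1]))$, and set $h'(\sigma) = \pad(h(\beta(\sigma)), c(\sigma))$. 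Since $\pad$ is total computable and $\Ia$ is an \Rmonoid, $h' \in \Ia$; and since $W_{\pad(e,k)} = W_e$, the semantic behaviour of $h'$ is identical to that of $h$ at every step.

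The key steps, in order, are: \textbf{(1)} verify $h' \in \Ia$ — this is immediate from $\totalCp \subseteq \Ia$ and closure of $\Ia$ under composition, once $c$ and the assembly of $h'$ from $h$ are written as composition with total computable functions (for $\beta = \G$ this is direct; for $\Psd,\Sd$ one checks $c$ is well-defined as a function of the $\beta$-information, which it is because the count of mind changes along the canonical extension is determined by $\beta(\sigma)$). \textbf{(2)} Show $h'$ is syntactically decisive on $\Txt(\La) \cup \Txt(\Sa)$: if $h'$ outputs the same padded index $\pad(e,k)$ at steps $n < m$, then the counter $c$ did not increase between $n$ and $m$, so $h$ made no syntactic mind change on $[n,m]$, hence $h'$ is constant on $[n,m]$ — so there is nothing to "return to". \textbf{(3)} Show $h'$ still $\Txt\beta\delta\Ex$-learns every $L \in \La$ and is defined and satisfies $\delta,\delta'$ on $\Txt(\Sa)$: convergence is inherited because $h'$ converges (syntactically) iff $h$ does, to a semantically equal index; $\Ex$-correctness follows since $W_{\pad(e,k)} = W_e$; and $\delta, \delta'$ transfer because they are \emph{pseudo-semantic} — $h'$ has the same $W$-values as $h$ at each step and introduces \emph{no new syntactic mind changes} (it only possibly \emph{adds} mind changes where $h$ already had one, wait — more precisely, $h'$ changes exactly when either $h$ changes or the counter ticks, and the counter ticks only when $h$ changes, so $h'$ changes iff $h$ changes), which is exactly the latitude a pseudo-semantic restriction permits.

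The main obstacle I anticipate is step \textbf{(3)}, and within it the precise interplay with pseudo-semanticity and the set-driven/partially-set-driven cases. For $\beta \in \{\Psd, \Sd\}$ one must be careful that $c(\sigma)$ depends only on $\beta(\sigma)$ and not on the order of presentation — i.e.\ that "number of syntactic mind changes of $h$" is a function of the content (and length), which holds because we count along the \emph{canonical} route determined by $\beta(\sigma)$, but this needs to be spelled out so that $h'$ is a legitimate $\beta$-learner. One must also confirm that replacing $h$ by $h'$ does not destroy $\delta$ or $\delta'$ on the \emph{extra} texts in $\Txt(\Sa)$ where $h$ need not converge at all: here pseudo-semanticity is used in full generality (same $W$-values, same or fewer distinct consecutive syntactic hypotheses), and one should double-check that the direction of "no new mind changes" is the one the definition of pseudo-semantic grants. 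Everything else — $h' \in \Ia$, decisiveness, $\Ex$-correctness — is routine padding bookkeeping.
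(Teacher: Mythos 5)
Your construction works for $\beta = \G$, and there it is a legitimate (and somewhat simpler) variant of the paper's argument: padding with a monotone mind-change counter instead of, as the paper does, padding with the minimal information $\tau_\sigma$ since which $h$ has been constant. For $\G$ the counter is computed from the actual prefix, is non-decreasing along a text, and equal padded indices at $n<m$ therefore force the counter (hence $h$, hence $h'$) to be constant on $[n,m]$; correctness and the transfer of the pseudo-semantic restrictions then go through as you say.

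The gap is in the cases $\beta \in \{\Psd,\Sd\}$, exactly at the point you flagged but did not resolve. The number of syntactic mind changes of $h$ along the \emph{actual} presentation is not a function of $\beta(\sigma)$, and your fix --- counting along the canonical chain determined by $\beta(\sigma)$ --- makes $c$ well defined but destroys step (2): the canonical chain of $\content(T[m])$ need not pass through $\content(T[n])$ for $n<m$, so the counter is no longer monotone along texts and equal counter values certify nothing about intermediate hypotheses. Concretely, for an $\Sd$-learner with $h(\emptyset)=h(\{5\})=h(\{3\})=h(\{3,4\})=h(\{3,4,5\})=a$ and $h(\{4\})=h(\{4,5\})=b\neq a$, consider a text presenting $5,4,3$: your $h'$ outputs $\pad(a,0)$ on $\{5\}$, then $\pad(b,1)$ on $\{4,5\}$ (canonical chain $\emptyset,\{4\},\{4,5\}$), then again $\pad(a,0)$ on $\{3,4,5\}$ (canonical chain $\emptyset,\{3\},\{3,4\},\{3,4,5\}$, no change), returning to a syntactically abandoned hypothesis. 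So syntactic decisiveness fails, and no order-dependent counter can be salvaged, since a $\Psd$-/$\Sd$-learner cannot see the order. The paper avoids this by padding not with a count but with the minimal sequence $\tau_\sigma$ such that $h$ is constant on all information between $\beta(\tau_\sigma)$ and $\beta(\sigma)$, together with a predicate $Q$ that quantifies over \emph{all} $\sigma'$ with $\beta(\sigma')\preceq\beta(\sigma)$ (all $\preceq_\beta$-predecessors, not just a canonical chain); this datum is a function of $\beta(\sigma)$, is presentation-order independent, and its stabilization along any text is what yields both decisiveness and $\Ex$-convergence. Your proposal would need to be replaced by (or rebuilt around) such an order-independent device for $\Psd$ and $\Sd$.
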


\begin{proof}
  We generalize the ideas of \citet{KP16}, where a similar result has been shown for $\G$- and $\Sd$-learners. Let $h$ be as stated in the theorem. For a finite sequence $\sigma$, we define
  \begin{align*}
    Q(\beta(\sigma)) &\Leftrightarrow \forall \sigma', \beta(\sigma') \preceq \beta(\sigma)\colon \\ 
    & \phantom{\Leftrightarrow \forall \sigma',} h(\beta(\sigma')) = h(\beta(\sigma)) \Rightarrow \forall \sigma'', \beta(\sigma'') \in [\beta(\sigma'), \beta(\sigma)]\colon h(\beta(\sigma'')) = h(\beta(\sigma)), \\
    \tau_\sigma &= \min \{ \sigma' \mid \beta(\sigma') \preceq \beta(\sigma) \wedge \forall \sigma'', \beta(\sigma'') \in [\beta(\sigma'), \beta(\sigma)]\colon h(\beta(\sigma'')) = h(\beta(\sigma)) \}, \\
    h'(\beta(\sigma)) &= \begin{cases} \pad(h(\beta(\sigma)), 0), &\falls Q(\beta(\sigma)), \\ \pad(h(\beta(\tau_\sigma)), \beta(\tau_\sigma)), &\sonst \end{cases}
  \end{align*}
  Without loss of generality, we may assume that we do not encode $\beta(\tau_\sigma)$ as $0$. Intuitively, $Q(\beta(\sigma))$ checks whether $h(\beta(\sigma))$ has been conjectured without interruptions. In this case, $h$ conjectures $\pad(h(\beta(\sigma)), 0)$, where the second component zero indicates the absence of interruptions. Otherwise, the second component becomes $\beta(\tau_\sigma)$, which is the minimal information on which no mind change has been witnessed. Note that the first component is, by definition of $\tau_\sigma$, the same in both cases. We first show syntactic decisiveness. Let therefore $\sigma_1, \sigma_2, \sigma_3$ be sequences such that $\beta(\sigma_1) \preceq \beta(\sigma_2) \preceq \beta(\sigma_3)$ and
  \begin{align}
    h'(\beta(\sigma_1)) = h'(\beta(\sigma_3)). \label{Eq:SynEqv}
  \end{align}
  To show $h'(\beta(\sigma_1)) = h'(\beta(\sigma_2))$, we distinguish between the following cases.
  \begin{enumerate}
    \itemin{1. Case:} $\unpad_2(h'(\beta(\sigma_3))) = 0$. Then, we have $\pad(h(\beta(\sigma_1)), 0) = h'(\beta(\sigma_1)) = h'(\beta(\sigma_3)) = \pad(h(\beta(\sigma_3)), 0)$, meaning that we have, by definition of $h'$, $Q(\beta(\sigma_1))$ and $Q(\beta(\sigma_3))$ as well as $h(\beta(\sigma_1)) = h(\beta(\sigma_3))$ since the padding function is injective. As $Q(\beta(\sigma_3))$ holds, we have that, for all $\sigma'$ such that $\beta(\sigma') \preceq \beta(\sigma_3)$,
    \[
      h(\beta(\sigma')) = h(\beta(\sigma_3)) \Rightarrow \forall \sigma'', \beta(\sigma'') \in [\beta(\sigma'), \beta(\sigma_3)] \colon h(\beta(\sigma'')) = h(\beta(\sigma_3)).
    \]
    As $h(\beta(\sigma_1)) = h(\beta(\sigma_3))$, this, in particular, holds true for $\sigma' = \sigma_1$. Choosing $\sigma'' = \sigma_2$, we get $h(\beta(\sigma_2)) = h(\beta(\sigma_3))$. Since $Q(\beta(\sigma_3))$ and $\beta(\sigma_2) \preceq \beta(\sigma_3)$, it also holds that $Q(\beta(\sigma_2))$. Thus, we get $h'(\beta(\sigma_2)) = \pad(h(\beta(\sigma_2)), 0)$ which is equal to $h'(\beta(\sigma_3))$, as desired.
    \itemin{2. Case:} $\unpad_2(h'(\beta(\sigma_3))) \neq 0$. In this case, unpadding the second components of the hypotheses in Equation~\eqref{Eq:SynEqv}, we get $\beta(\tau_{\sigma_1}) = \beta(\tau_{\sigma_3})$, meaning that for $\tau_{\sigma_3}$ we have $\beta(\tau_{\sigma_3}) \preceq \beta(\sigma_1)$ and
    \begin{align}
      \forall \sigma'', \beta(\sigma'') \in [\beta(\tau_{\sigma_3}), \beta(\sigma_3)] \colon h(\beta(\sigma'')) = h(\beta(\sigma_3)). \label{Eq:SynDecCond}
    \end{align}
    Choosing $\sigma'' = \sigma_2$, in particular, we get $h(\beta(\sigma_1)) = h(\beta(\sigma_2)) = h(\beta(\sigma_3))$. From Equation~\eqref{Eq:SynDecCond}, one can also easily see that $\beta(\tau_{\sigma_2}) = \beta(\tau_{\sigma_3})$. Now, as $\neg Q(\beta(\sigma_1))$, there exist $\sigma, \tilde{\sigma}$ such that $\beta(\sigma) \preceq \beta(\tilde{\sigma}) \preceq \beta(\sigma_1)$ and $h(\beta(\sigma)) = h(\beta(\sigma_1)) \neq h(\beta(\tilde{\sigma}))$. In particular, $\sigma, \tilde{\sigma}$ are such that $\beta(\sigma) \preceq \beta(\tilde{\sigma}) \preceq \beta(\sigma_2)$ and $h(\beta(\sigma)) = h(\beta(\sigma_2)) \neq h(\beta(\tilde{\sigma}))$, meaning that $\neg Q(\beta(\sigma_2))$. Thus, we get $h'(\beta(\sigma_2)) = h'(\beta(\sigma_3))$ as desired.
  \end{enumerate}
  We next show $\Ex$-convergence. Let $L \in \La$ and $T \in \Txt(L)$. Let $n_0$ be minimal such that $W_{h(\beta(T[n_0]))} = L$ and, for all $n \geq n_0$, we have $h(\beta(T[n_0])) = h(\beta(T[n]))$. Furthermore, let $n_1 \geq n_0$ be minimal such that, for all $n \geq n_1$, we have $\tau_{T[n_1]} = \tau_{T[n]}$. Such $n_1$ exists as $\N$ with $\leq$ on sequences is a well-order. Now, for $n \geq n_1$, $h'(\beta(T[n]))$ either conjectures $\pad(h(\beta(T[n])), 0)$ if no interruption has been witnessed. Otherwise, there exists $n_2 \geq n_1$ where an interruption has been witnessed. Then, for all $n \geq n_2$, we have that $h'(\beta(T[n]))$ conjectures $\pad(h(\beta(\tau_{T[n]})), \beta(\tau_{T[n]}))$. Note that this does not imply infinitely many mind changes, as we have that $n \geq n_1$ and, with it, $\tau_{T[n_1]} = \tau_{T[n]}$. In both cases, we have that $h'$ $\Ex$-learns $L$ from text $T$.

  Since $h'$ only makes mind changes when it witnesses a mind change or interruption of $h$ and since $h$ serves the restrictions $\delta$ on $\Txt(\La)$ and $\delta'$ on $\Txt(\Sa)$, we also have that $h'$ serves those. This concludes the proof.
\end{proof}

We make use of this result when proving our main generalizations. However, we first have to deal with the following issue. Ideally, when generalizing the result of \citet{KP16}, we would obtain that any (possibly globally) target-cautious $\G$-learner is as powerful as a (possibly globally) witness-based one. However, the issue arises as globally target-cautious learners are extremely weak. In fact, as they may never overgeneralize they can solely learn finite languages, as we show in the following theorem.

\begin{theorem}
  Let $\La \subseteq \Pow(\N)$. We have that
  \[
    \La \subseteq \Pow_\Fin(\N) \Leftrightarrow \La \in [\tau(\CautTar)\Txt\Sd\Ex] \Leftrightarrow \La \in [\tau(\CautTar)\Txt\G\Bc].
  \]
\end{theorem}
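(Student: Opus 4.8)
The plan is to prove the cycle of implications. The first implication, $\La \subseteq \Pow_\Fin(\N) \Rightarrow \La \in [\tau(\CautTar)\Txt\Sd\Ex]$, is the easy direction: a set-driven learner that, on input $\content(T[n]) = D$, simply outputs $\ind(D)$ (the canonical index for the finite set $D$) is set-driven, learns every finite language explanatorily, and is globally target-cautious, since $W_{\ind(D)} = D$ is never a proper superset of anything — in fact $\content(T) = D$ on any text of a finite language, so $\neg(\content(T) \subsetneq W_{p(n)})$ holds trivially as $W_{p(n)} \subseteq \content(T)$ along such a text. Actually one must be slightly careful: target-cautiousness is required globally (on all texts), so one should check $\neg(\content(T) \subsetneq \content(T[n]))$ for \emph{every} text $T$, which is immediate since $\content(T[n]) \subseteq \content(T)$ always. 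So the guess $\ind(\content(T[n]))$ works.

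The second implication, $[\tau(\CautTar)\Txt\Sd\Ex] \subseteq [\tau(\CautTar)\Txt\G\Bc]$, is purely a matter of weakening the learning criterion: a set-driven learner is a special case of a Gold-style learner (feed it $\content(T[n])$ regardless of order), and $\Ex$-convergence implies $\Bc$-convergence, while the restriction $\CautTar$ is stated identically in both settings and depends only on the sequence of hypotheses' semantics. So this inclusion is essentially definitional once one observes that the $\Sd$-learner, viewed as a $\G$-learner via $h \circ \Sd$, still satisfies $\CautTar$ globally (the hypotheses produced are a subset of those produced in the $\Sd$ setting).

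The real content is the third implication, $\La \in [\tau(\CautTar)\Txt\G\Bc] \Rightarrow \La \subseteq \Pow_\Fin(\N)$, i.e.\ that a globally target-cautious $\G$-learner can only $\Bc$-learn finite languages. Here I would argue by contradiction: suppose $h$ globally target-cautiously $\Bc$-learns some infinite $L$. Take a text $T$ of $L$; by the $\Bc$-locking sequence property there is some $n$ with $W_{h(T[n])} = L$. Now consider extending $T[n]$ by an enumeration of a \emph{proper} subset $L' \subsetneq L$ that is still infinite — more precisely, pick a finite $D \subseteq L$ with $\content(T[n]) \subseteq D$, and let $\sigma$ be $T[n]$ followed by the canonical enumeration of $D$; then $\sigma \in D_\#^* \subseteq L_\#^*$. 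Extend $\sigma$ to a text $T'$ of the finite set $D$ (pad with $\#$'s). Then $\content(T') = D \subsetneq L = W_{h(T[n])}$, so at position $|T[n]|$ along $T'$ the learner overgeneralizes the target language $D$, violating $\CautTar(\G(h,T'),T')$ — contradicting that the restriction holds globally. The key obstacle is being careful that $\CautTar$ is required on \emph{all} texts (since there is no $\Sa$-style localization in the statement, the "$\tau$" prefix means the restriction is global), which is exactly what makes this work: one is free to choose $T'$ to be a text of a finite set even though $h$ need not learn that set. One should double-check the indexing conventions of $\CautTar(p,T)$: it reads $\forall n\colon \neg(\content(T) \subsetneq W_{p(n)})$, and here $\content(T') = D$ and $W_{p(|T[n]|)} = W_{h(T[n])} = L \supsetneq D$, so the violation is at $n = |T[n]|$, giving the contradiction. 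This closes the cycle.
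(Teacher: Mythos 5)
Your proposal is correct and follows essentially the same route as the paper: the same $\ind(\content(T[n]))$ learner for the first implication, the same definitional inclusion for the second, and the same contradiction for the third, where the hypothesis $W_{h(T[n])}=L$ reappears on a text of a finite subset of $L$ (the paper simply uses $T'=T[n]^\frown\#^\infty$, so your extra detour through a finite $D\supseteq\content(T[n])$ is unnecessary but harmless) and violates global target-cautiousness.
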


\begin{proof}
  As the learner $h$ which, for all finite $D \subseteq \N$, is defined as $h(D) = \ind(D)$ learns $\Pow_\Fin(\N)$, we have that 
  $$\La \subseteq \Pow_\Fin(\N) \Rightarrow \La \in [\tau(\CautTar)\Txt\Sd\Ex].$$ 
  As $[\tau(\CautTar)\Txt\Sd\Ex] \subseteq [\tau(\CautTar)\Txt\G\Bc]$, we also have 
  $$\La \in [\tau(\CautTar)\Txt\Sd\Ex] \Rightarrow \La \in [\tau(\CautTar)\Txt\G\Bc].$$
  It remains to be shown that if $\La \in [\tau(\CautTar)\Txt\G\Bc]$ then $\La \subseteq \Pow_\Fin(\N)$. We show this by contradiction. To that end, let $h$ be a learner such that $\La \subseteq \tau(\CautTar)\Txt\G\Bc(h)$ and assume there exists an infinite $L \in \La$. Let $T \in \Txt(L)$, and let $n$ be such that $W_{h(T[n])} = L$. Now, considering the text $T' \coloneqq T[n] ^\frown \#^\infty$, we have
  \[
    \content(T') = \content(T[n]) \subsetneq L = W_{h(T[n])} = W_{h(T'[n])}.
  \]
  Thus, $h$ is not $\CautTar$ on text $T'$. This is a contradiction, so we have
  $$\La \in [\tau(\CautTar)\Txt\G\Bc] \Rightarrow \La \subseteq \Pow_\Fin(\N).$$
  This concludes the proof.
\end{proof}

We overcome this issue by observing that witness-based behaviour may be obtained whenever the learner is defined, solely on languages it is supposed to learn it needs to be target-cautious. The idea is that, when being target-cautious, any wrong guess may still be changed when observing a missing element from the target-language. Now we can provide the general versions of the discussed results of \citet{KS95}, \citet{KP16} and \citet{KS16}. Starting with Gold-style learners, \citet{KP16} show that target-cautious Gold-style learners may be assumed weakly monotone, cautious and conservative without losing learning power. We show that these even may be assumed witness-based. Furthermore, within a \emph{single} theorem, we show that this is not only the case when dealing with partial learners.

\begin{theorem}\label{Thm:AllG}
  Let $\Ia$ be an {\Rmonoid} and $\Sa \subseteq \Pow(\N)$ closed under subsets. Let $\La$ be a class of languages. Then, the following are equivalent.
  \begin{enumerate}[label=\textnormal{(\arabic*)}]
    \item $\La$ can be $\Txt\G\Wb\Ex$-learned by an $\Ia$-learner which is $\Wb$ also on $\Txt(\Sa)$. \label{GWb}
    \item $\La$ can be $\Txt\G\CautTar\Ex$-learned by an $\Ia$-learner which is defined also on $\Txt(\Sa)$. \label{GCT}
  \end{enumerate}
\end{theorem}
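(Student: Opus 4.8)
The plan is to prove the two implications separately, with \ref{GWb}$\Rightarrow$\ref{GCT} being routine. For this direction I would keep the \emph{same} learner $h$: since $\Wb(p,T)$ only makes sense when $p=\G(h,T)$ is total, an $\Ia$-learner witnessing \ref{GWb} is in particular defined on $\Txt(\Sa)$, so only $\CautTar$ on $\Txt(\La)$ remains, and this is the familiar fact that witness-based $\Ex$-learners are target-cautious --- if on some $T\in\Txt(L)$ with $L\in\La$ we had $\content(T)\subsetneq W_{p(n)}$, then $\content(T[m])\cap W_{p(m)}\subseteq\content(T)\subseteq W_{p(n)}$ for all $m\ge n$, so by the contrapositive of $\Wb$ there is no mind change after step $n$ and $h$ converges to the overgeneralising index $p(n)$, contradicting $\Ex$.

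For \ref{GCT}$\Rightarrow$\ref{GWb} I would take an $\Ia$-learner $h$ witnessing \ref{GCT} and first apply Theorem~\ref{thm:ThmSynDec} with $\beta=\G$, $\delta=\CautTar$ and $\delta'=\T$ to assume, without loss of generality, that $h$ is syntactically decisive on $\Txt(\La)$ and on $\Txt(\Sa)$; thus along any such text the hypotheses of $h$ occur in disjoint contiguous blocks with distinct values $f_0,f_1,f_2,\dots$ in order of first appearance. I then build $h'$ which, on a sequence $\sigma$, reconstructs the run of $h$ on $\sigma[0],\dots,\sigma$ and maintains a list $A$ of \emph{adopted} hypotheses: it starts by adopting $f_0=h(\varepsilon)$, and at step $t$, if $h(\sigma[t])$ differs from the last adopted hypothesis $c$ and, for \emph{every} $g\in A$, some number enumerated within $t$ steps lies in $\content(\sigma[t])\cap W_{h(\sigma[t])}$ but not in $W_g$, then $h'$ adopts $h(\sigma[t])$ (appending it to $A$); its output is the last adopted hypothesis. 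This is an explicit procedure built from $h$ using total computable functions in the sense relevant to $\totalCp$-monoids (as in \citet{KS16} and \citet{KP16}), so $h'\in\Ia$, and for $\Ia\in\{\totalCp,\partialCp\}$ it is immediate.

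The correctness argument then has three parts. First, $h'$ never re-adopts a hypothesis already in $A$, since the witness test against that very hypothesis is unsatisfiable; hence on $\Txt(\La)\cup\Txt(\Sa)$ the adopted hypotheses occur without returns, i.e.\ $h'$ is syntactically decisive there, and together with the adoption rule this yields $\Wb$ on both domains: given a text $T$ in this domain and steps $n<m$ with a mind change of $h'$ in between, the hypotheses of $h'$ at $n$ and $m$ differ, with the latter adopted strictly later, and at that adoption an element of $\content(T[m])$ lying in $W_{h'(T[m])}\setminus W_{h'(T[n])}$ was witnessed. Second, for $\Ex$-convergence on $T\in\Txt(L)$ with $L\in\La$: $h$ converges, say to $e^\ast$ with $W_{e^\ast}=L$, so $h'$ adopts only finitely many hypotheses; $h'$ never abandons an adopted hypothesis $g$ with $W_g=L$, as on a text of $L$ no content element witnesses against it; and if $h'$ were stuck on a wrong hypothesis, then all of $A$ would be wrong, hence --- being hypotheses of the target-cautious $h$ on $T$ --- each would omit some element of $L$, so once these finitely many elements have appeared in $T$ and been enumerated into $W_{e^\ast}$ the adoption rule fires on $e^\ast$, a contradiction; thus $h'$ converges to a correct hypothesis. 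Third, on $\Txt(\Sa)$ the learner $h'$ is total (the witness tests halt and $h$ is defined there), which is all \ref{GWb} requires there beyond $\Wb$.

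The step I expect to be the main obstacle is calibrating the adoption rule: insisting on a witness against \emph{every} previously adopted hypothesis (with possibly different witnesses for different ones) is exactly what makes the full, non-consecutive $\Wb$ condition hold, and the delicate part is checking that this strong demand never permanently blocks progress on a text of a language in $\La$ --- which is precisely where target-cautiousness of $h$ and the impossibility of $h'$ abandoning a semantically correct hypothesis come in. A secondary point, handled as in \citet{KS16} and \citet{KP16}, is confirming that the construction keeps the learner inside the given $\totalCp$-monoid $\Ia$.
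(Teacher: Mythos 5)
Your direction \ref{GWb}$\Rightarrow$\ref{GCT} is fine and matches the intended (immediate) argument. The problem is in \ref{GCT}$\Rightarrow$\ref{GWb}: your adoption rule asks, before adopting $h(\sigma[t])$, that for every previously adopted $g\in A$ some element of $\content(\sigma[t])\cap W_{h(\sigma[t])}$ lie \emph{outside} $W_g$. Non-membership in a c.e.\ set is a $\Pi_1$ condition; it can be neither decided nor positively verified in finite time, so the learner $h'$ you describe is not computable (in particular not in $\Ia$, and your claim in the third part that ``the witness tests halt'' is exactly where this breaks). If you instead read the test stage-boundedly (``not in $W_g^t$''), $h'$ becomes computable, but then $\Wb$ is no longer guaranteed: the element you used to justify abandoning $g$ may later be enumerated into $W_g$, and nothing in your construction supplies a replacement witness, since your outputs are the original hypotheses of $h$ and need not even contain the data seen. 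This is not a calibration detail but the central difficulty of the theorem.

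The paper's proof resolves precisely this point by never outputting $h$'s own conjectures. It outputs indices $p(\sigma')$ of \emph{poisoned} sets: $W_{p(\sigma')}$ follows the enumeration of $W_{h(\sigma')}$ only as long as no mind change of $h$ on short extensions inside the enumerated part is visible, and stops otherwise (``delay until refutation''), while always containing $\content(\sigma')$. A mind change of $h'$ is permitted only when an extension $\tau\supseteq\sigma'$ with $h(\sigma')\neq h(\tau)$ and $\content(\tau)\not\subseteq W_{h(\sigma')}^{|\tau|-1}$ is found; this finite, decidable evidence \emph{certifies} that $W_{p(\sigma')}$ will never enumerate all of $\content(\sigma)$, because doing so would require copying a stage of $W_{h(\sigma')}$ at which the refutation is already visible. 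Since every new conjecture (and all later ones, being built on extensions) contains the current content, the witness in $\content(\sigma)\setminus W_{p(\sigma')}$ stays inside all later hypotheses, which yields the full, non-consecutive $\Wb$ condition that you correctly identified as the goal --- but without ever having to decide non-membership. Target-cautiousness and syntactic decisiveness are then used, much as in your convergence sketch, to show that this delayed learner still $\Ex$-converges correctly. To repair your proposal you would have to replace the adopted raw hypotheses by such stoppable/poisoned variants and trigger adoptions on finitely verifiable refutation evidence rather than on oracle non-membership tests; as it stands, the construction does not go through.
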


\begin{proof}
  The direction \ref{GWb}$\Rightarrow$\ref{GCT} follows immediately. For the other direction, we follow and expand the proof of $[\Txt\G\Conv\Ex] = [\Txt\G\CautTar\Ex]$, see \citet[Thm.~13]{KP16}. Let $h \in \Ia$ be a learner and $\La$ be $\Txt\G\CautTar\Ex$-learnable by $h$ which is also defined on $\Txt(\Sa)$. By Theorem~\ref{thm:ThmSynDec} we can assume $h \in \Ia$ to be syntactically decisive on $\Txt(\La)$ and $\Txt(\Sa)$. Let $p \in \totalCp$ be such that
  \begin{align*}
    W_{p(\sigma)} = \bigcup_{t \in \N} \begin{cases} 
      \content(\sigma), &\falls \neg (\content(\sigma) \subseteq W_{h(\sigma)}^t), \\ 
      W_{h(\sigma)}^t, &\sonstfalls \forall \rho \in (W_{h(\sigma)}^t)^{\leq t}\colon h(\sigma) = h(\sigma ^\frown \rho), \\ 
      \emptyset, &\sonst 
    \end{cases}
  \end{align*}
  Note that for all sequences $\sigma$ we have
  \begin{align}
    W_{p(\sigma)} \subseteq \content(\sigma) \cup W_{h(\sigma)}. \label{pSubseth}
  \end{align}
  For given sequences $\sigma$ and $\tau$, we define 
  \[
    \tau \trianglelefteq \sigma :\Leftrightarrow \content(\tau) \subseteq \content(\sigma) \wedge |\tau| \leq |\sigma|. 
  \] 
  Given finite sequences $\sigma, \sigma'$ and $\tau$, we define the computable predicate 
  \[
    Q_\sigma(\sigma', \tau) \Leftrightarrow \content(\sigma') \subsetneq \content(\sigma) \wedge h(\sigma') \neq h(\tau) \wedge \content(\tau) \not\subseteq W_{h(\sigma')}^{|\tau|-1}.
  \]
  Given a sequence $\sigma$ we can define the learner $h'$ as follows. For convenience, for $\sigma \neq \varepsilon$, let $\sigma'$ be such that $h'(\sigma^-) = p(\sigma')$. Then, we define
  \begin{align*}
    h'(\sigma) = \begin{cases} 
      p(\varepsilon), &\falls \sigma = \varepsilon, \\ 
      p(\tau ^\frown \sigma), &\sonstfalls \exists \tau, \sigma' \subseteq \tau \trianglelefteq \sigma\colon Q_\sigma(\sigma', \tau), \\
      h'(\sigma^-), &\sonst 
    \end{cases}
  \end{align*}
  We motivate the intuition behind the learner $h'$. Given a sequence $\sigma \neq \varepsilon$, let $\sigma'$ be the sequence $h'$ based its previous guess on. Then, if not consistent, $h'$ only changes its mind if there exists an extension $\tau$ of $\sigma'$ on which $h$ made a mind change and if there exist elements in $\content(\tau)$ not yet enumerated by $W_{h(\sigma')}^{|\tau|-1}$.

  We first show that $h'$ converges on any text for a language $L \in \La$. Let $L \in \La$ and $T \in \Txt(L)$. Assume $h'$ does not converge on $T$. Let $(p(\sigma_i'))_{i \in \N}$ be the sequence of hypotheses output by $h'$ on text $T$. As $h'$ makes infinitely many mind changes, the sequence $(p(\sigma_i'))_{i \in \N}$ contains infinitely many different hypotheses. For any $i$ where $h'(T[i])$ makes a mind change, the corresponding hypothesis is, for apt $\tau_i$, $p({\tau_i}\concat T[i])$ instead of its previous hypothesis. Thus, in particular, for infinitely many $i$, $\sigma_{i+1}'$ has the form ${\tau_i}^\frown T[i]$. Thus, $T' = \bigcup_{i \in \N} \sigma_i'$ is a text of $L$. As $h'$ makes infinitely many mind changes, for each $\sigma_i'$ exists $\tau_i$, $\sigma_i' \subseteq \tau_i \subseteq \sigma_{i+1}'$ with $h(\sigma_i') \neq h(\tau_i)$, as seen in the second case of the definition of $h'$. This contradicts the convergence of $h$ on $T'$. 

  Next, we show that $h'$ converges to the correct hypothesis on $T$. Let $n_0$ be the point of convergence, that is, $n_0$ is minimal such that for all $n \geq n_0$ we have $h'(T[n_0]) = h'(T[n])$. We abbreviate $\sigma \coloneqq T[n_0]$. Let furthermore $\sigma'$ be such that $h'$ converges to $p(\sigma')$, that is, $h'(\sigma) = p(\sigma')$. We consider the following two cases.
  \begin{enumerate}
    \itemin{1. Case:} $\sigma'$ is a locking sequence for $h$ on $L$. Then, we have $W_{h(\sigma')} = L$ and, for all $\rho \in (W_{h(\sigma')})^*$, we have $h({\sigma'} ^\frown \rho) = h(\sigma')$. Thus, we get $W_{p(\sigma')} = W_{h(\sigma')} = L$.
    \itemin{2. Case:} $\sigma'$ is not a locking sequence for $h$ on $L$. As $h'$ converges, we have for all $n$ and $\tau$ with $\sigma' \subseteq \tau \trianglelefteq T[n]$ that $\neg Q_\sigma(\sigma', \tau)$, that is, for all $n \in \N_{\geq n_0}$ and $\tau \in L^*$, with $\sigma' \subseteq \tau$, we have
    \[
      \content(\sigma') = \content(T[n]) \vee h(\sigma') = h(\tau) \vee \content(\tau) \not\subseteq W_{h(\sigma')}^{|\tau|-1}.
    \]
    We distinguish the following cases.
    \begin{enumerate}
      \itemin{2.1. Case:} For all $n \in \N_{\geq n_0}$ we have $\content(\sigma') = \content(T[n])$. Then, $\content(\sigma') = L$. Thus, $L \subseteq W_{p(\sigma')}$ by definition of $p$. Since $h$ is $\CautTar$, we have $\neg (L \subsetneq W_{h(\sigma')})$. Together with Inclusion~\eqref{pSubseth}, we get $L = W_{p(\sigma')}$.
      \itemin{2.2. Case:} For all $\tau \in L^*, \sigma' \subseteq \tau$ we have $h(\sigma') = h(\tau)$. So, $h$ never changes its mind on an extension of $\sigma'$ within the language $L$, but $\sigma'$ is no locking sequence either. This means that $W_{h(\sigma)} \neq L$, meaning that $h$ does not learn $L$ on any text starting with $\sigma'$, a contradiction. 
      \itemin{2.3. Case:} There exists $n \in \N_{\geq n_0}$ with $\neg (\content(\sigma') = \content(T[n]))$ and there exists $\tau \in L^*, \sigma' \subseteq \tau$ such that $h(\sigma') \neq h(\tau)$. Let $n_1$ be minimal such that $\neg (\content(\sigma') = \content(T[n_1]))$. Let $n \geq n_1$. Note that $\content(\sigma') \subsetneq \content(T[n])$ holds.

      We proceed by showing $L \subseteq W_{h(\sigma')}$ and afterwards $W_{p(\sigma')} = W_{h(\sigma')}$. As $h$ is $\CautTar$, this suffices in order to show $L = W_{p(\sigma')}$. Let $\tau \in L^*, \sigma' \subseteq \tau$ such that $h(\sigma') \neq h(\tau)$ as assumed to exist in this case. Then, for $x \in L \setminus \content(\sigma')$, we have $h(\sigma') \neq h(\tau \concat x)$ by syntactic decisiveness of $h$, meaning that, as $\neg Q_{T[n]}(\sigma', \tau\concat x)$, we have $\content(\tau \concat x) \subseteq W_{h(\sigma')}^{|\tau\concat x| - 1}$. Since this holds for all $x \in L \setminus \content(\sigma')$ and since $\content(\sigma') \subseteq \content(\tau\concat x)$, we get $L \subseteq W_{h(\sigma')}$.

      It remains to be shown $W_{p(\sigma')} = W_{h(\sigma')}$. The inclusion $W_{p(\sigma')} \subseteq W_{h(\sigma')}$ follows from Inclusion~\eqref{pSubseth} and, as just shown, $L \subseteq W_{h(\sigma')}$. For the other direction, assume there exists $x \in W_{h(\sigma')} \setminus W_{p(\sigma')}$. Then, there exists a minimal $t_x$ such that $x \in W_{h(\sigma')}^{t_x}$. But, as $x$ is not in $W_{p(\sigma')}$, there also is $\rho \in (W_{h(\sigma)}^{t_x})^*$, with $|{\sigma'}\concat \rho| \leq t_x$, such that $h(\sigma') \neq h({\sigma'}\concat\rho)$. We abbreviate $\tau \coloneqq{\sigma'}\concat \rho$.  Due to $h$ being syntactically decisive, we also have $h(\sigma') \neq h(\tau\concat x)$. By assumption, we have $\neg Q_{T[n]}(\sigma', \tau\concat x)$, that is, 
      \[
        \content(\sigma') = \content(T[n]) \vee h(\sigma') = h(\tau\concat x) \vee \content(\tau\concat x) \subseteq W_{h(\sigma')}^{|\tau\concat x| - 1}.
      \]
      However, as $n>n_1$, $\content(\sigma') = \content(T[n])$ does not hold and neither does $h(\sigma') = h(\tau\concat x)$. Thus, $\content(\tau\concat x) \subseteq W_{h(\sigma')}^{|\tau\concat x| - 1}$. But the fact that $x \in W_{h(\sigma')}^{|\tau\concat x| - 1}$ and $|\tau\concat x| - 1 < t_x$ contradict the choice of $t_x$.
    \end{enumerate}
  \end{enumerate}
  Finally, we show that $h'$ is witness-based on $\Txt(\La)$ and $\Txt(\Sa)$. Let $\sigma$ be a sequence where $h'$ makes a mind change, that is, for some $\tau$ with $\sigma' \subseteq \tau \trianglelefteq \sigma$ such that $Q_\sigma(\sigma', \tau)$, we have $p(\sigma') = h'(\sigma^-) \neq h'(\sigma) = p(\tau^\frown \sigma)$. We show that 
  \[
    \left(\content(\sigma) \cap W_{h'(\sigma)}\right) \setminus W_{h'(\sigma^-)} \neq \emptyset.
  \]
  As, by definition of $p$, $W_{h'(\sigma)} = W_{p(\tau\concat\sigma)} \supseteq \content(\tau\concat\sigma) \supseteq \content(\sigma)$, it suffices to show 
  \[
    \content(\sigma) \setminus W_{h'(\sigma^-)} \neq \emptyset.
  \]
  Furthermore, as later hypotheses of $h'$ are built on extensions of $\tau\concat\sigma$, this is sufficient in order to show that $h'$ is witness-based. We proceed with showing $\content(\sigma) \setminus W_{h'(\sigma^-)} \neq \emptyset$. Assume the opposite, that is, $\content(\sigma) \subseteq W_{h'(\sigma^-)} = W_{p(\sigma')}$. Since $Q_\sigma(\sigma', \tau)$, we have that $\content(\sigma') \subsetneq \content(\sigma)$, $h(\sigma') \neq h(\tau)$ and $\content(\tau) \not\subseteq W_{h(\sigma')}^{|\tau|-1}$. In particular, as $\content(\sigma') \subsetneq \content(\sigma)$, $W_{p(\sigma')}$ has to enumerate $\content(\sigma)$ by means of the second condition in the definition of $p$. Let $t > |\tau| - 1$ be a step where such a enumeration could take place, that is, $W_{h(\sigma')}^t \supseteq \content(\sigma)$. The condition $t > |\tau| - 1$ follows from the third condition of $Q_\sigma(\sigma', \tau)$, that is, $\content(\tau) \not\subseteq W_{h(\sigma')}^{|\tau|-1}$. Now, there exists some $\rho \in (W_{h(\sigma')}^t)^{\leq t}$ such that $\tau = {\sigma'} \concat \rho$. However, $h(\sigma') \neq h(\tau) = h({\sigma'}\concat\rho)$, meaning that $W_{p(\sigma')}$ does not enumerate $W_{h(\sigma')}^t$. Thus, $W_{p(\sigma')} \not\supseteq \content(\sigma)$, a contradiction.
\end{proof}

For example, if one is to take $\Ia = \totalCp$ (which is an {\Rmonoid}), one gains the same observation as for partial learners, namely that target-cautious learners may be assumed witness-based. However, this theorem provides another interesting result. With $\Ia = \totalCp$ and $\Sa = \Pow(\N)$, Theorem~\ref{Thm:AllG} shows that any anywhere defined learner (that is, any total learner) may be assumed \emph{everywhere} (that is, globally) witness-based. However, it maintains its learning power solely on languages it learns target-cautiously. With the observation of \citet{KP16} that any $\G$-learner may be assumed total, we get this property even for arbitrary learners instead of total ones. 

Using our general framework we extend this result even further. \citet{KS16} show that (possibly partial) target-cautious partially set-driven learners may be assumed witness-based. We observe that this also holds true for a variety of different learners. 

\begin{theorem}\label{Thm:AllPsd}
  Let $\Ia$ be an {\Rmonoid} and $\Sa \subseteq \Pow(\N)$ closed under subsets. Furthermore, let $\La$ be a class of languages. Then, the following are equivalent.
  \begin{enumerate}[label=\textnormal{(\arabic*)}]
    \item $\La$ can be $\Txt\Psd\Wb\Ex$-learned by an $\Ia$-learner which is $\Wb$ also on $\Txt(\Sa)$.\label{PsdWb}
    \item $\La$ can be $\Txt\Psd\CautTar\Ex$-learned by an $\Ia$-learner which is defined also on $\Txt(\Sa)$.\label{PsdCT}
  \end{enumerate}
\end{theorem}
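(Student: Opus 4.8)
The plan is to mirror the proof of Theorem~\ref{Thm:AllG}, but carry out the construction at the level of the set-and-counter information $(D,t)$ that a $\Psd$-learner sees, rather than at the level of sequences. The direction \ref{PsdWb}$\Rightarrow$\ref{PsdCT} is again immediate, since a witness-based learner is in particular target-cautious (any overgeneralisation would be a hypothesis that can never be justified). For the converse, let $h \in \Ia$ be a $\Txt\Psd\CautTar\Ex$-learner for $\La$ that is also defined on $\Txt(\Sa)$. First I would invoke Theorem~\ref{thm:ThmSynDec} with $\beta = \Psd$ to assume $h$ is syntactically decisive on $\Txt(\La)$ and $\Txt(\Sa)$; here the ordering $\preceq_{\Psd}$ plays the role that the extension relation on sequences played in the $\G$-case, namely $(D,t) \preceq_{\Psd} (D',t')$ iff $D \subseteq D'$ and $t \leq t'$.

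Next I would define an auxiliary computable function $p$ assigning to each pair $(D,t)$ a hypothesis $p(D,t)$ whose enumerated set starts with $D$ and then, conditionally, ``pumps up'' to $W_{h(D,t)}^s$ for increasing $s$, but only as long as $h$ makes no mind change on any small extension $(D \cup D', t+|D'|)$ with $D' \subseteq W_{h(D,t)}^s$; as soon as such a mind change is spotted, enumeration stalls. This guarantees the analogue of Inclusion~\eqref{pSubseth}, namely $W_{p(D,t)} \subseteq D \cup W_{h(D,t)}$, and that $W_{p(D,t)} = W_{h(D,t)}$ whenever $(D,t)$ is (close to) a locking information. Then, just as in Theorem~\ref{Thm:AllG}, I would build $h'$ which carries along the last ``trusted'' information $(D',t')$ on which it based its previous guess, outputs $p(D',t')$ by default, and only switches --- to $p$ applied to the current information joined with $(D',t')$ --- when it witnesses, via a computable predicate $Q$ analogous to $Q_\sigma(\sigma',\tau)$, that $h$ changed its mind on some extension of $(D',t')$ staying inside the currently seen content while that content is not yet covered by $W_{h(D',t')}$. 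The key point making $h'$ well-defined as a $\Psd$-learner is that $Q$ depends only on the pair $(\content(T[n]), n)$ and finitely much computation, and that the ``joined'' information is again a legitimate $\Psd$-argument.

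The three things to verify are then: (i) \emph{convergence} --- if $h'$ diverges on a text $T$ of $L \in \La$, the infinitely many switches assemble (via the joined informations) into a text of $L$ on which $h$ changes its mind infinitely often, contradicting $\Ex$-convergence of $h$ on that text; (ii) \emph{correctness} --- at the point of convergence, with trusted information $(D',t')$, either $(D',t')$ is essentially a locking information (so $W_{p(D',t')} = W_{h(D',t')} = L$), or by the failure of $Q$ forever after, every datum of $L \setminus D'$ is eventually enumerated into $W_{h(D',t')}^{\cdot}$ using syntactic decisiveness of $h$ exactly as in Case~2.3 of Theorem~\ref{Thm:AllG}, giving $L \subseteq W_{h(D',t')}$, and then $\CautTar$ of $h$ plus $W_{p(D',t')} = W_{h(D',t')}$ force $L = W_{p(D',t')}$; (iii) \emph{witness-basedness on $\Txt(\La)$ and $\Txt(\Sa)$} --- whenever $h'$ switches at stage $\sigma$, its new hypothesis enumerates all of $\content(\sigma)$, while the old one $W_{p(D',t')}$ fails to enumerate $\content(\sigma)$: the latter is argued by contradiction from the definition of $p$, using that $Q$ found a mind change of $h$ at an extension of $(D',t')$ at a time before the content could have been enumerated, so the ``pumping'' of $W_{p(D',t')}$ stalls short of $\content(\sigma)$. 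Since all later informations extend the joined one, no future hypothesis revisits the stalled set, so the witness condition propagates.

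I expect the main obstacle to be bookkeeping the counter component correctly: unlike the $\G$-case, where concatenating sequences automatically advances length, here I must ensure that joining the current information $(\content(T[n]), n)$ with a stored $(D',t')$ yields a pair that is genuinely $\preceq_{\Psd}$-above both, that the predicate $Q$ is monotone enough along a text so that $h'$ does not oscillate spuriously, and that the ``$\leq t$-bounded extension'' quantifier in the definition of $p$ interacts correctly with $t > |\tau|-1$-type bounds that drive the contradiction in step (iii). Once the right counter conventions are fixed, each of (i)--(iii) is a routine transcription of the corresponding argument in the proof of Theorem~\ref{Thm:AllG}, with $\preceq_{\Psd}$ replacing the sequence extension relation and the $\beta$-delayability of the involved restrictions (all pseudo-semantic, hence usable via Theorem~\ref{thm:ThmSynDec}) handling the passage through $\Txt(\Sa)$.
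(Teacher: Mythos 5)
There is a genuine gap at the heart of your construction: the learner $h'$ you describe is not well-defined as a $\Psd$-learner. In the proof of Theorem~\ref{Thm:AllG}, the ``trusted'' sequence $\sigma'$ is recovered by the recursion ``let $\sigma'$ be such that $h'(\sigma^-) = p(\sigma')$'', which is legitimate for a $\G$-learner because $\sigma$ determines its predecessor $\sigma^-$. On input $(D,t)$ a partially set-driven learner has no predecessor: different texts leading to the same pair $(D,t)$ have different histories and hence different ``last trusted informations'' $(D',t')$, so ``carrying along the information on which the previous guess was based'' is exactly the kind of state a $\Psd$-learner cannot keep. This is not a bookkeeping issue about the counter, as you suggest at the end; it is the central obstacle, and the paper's proof is organized around it. There the candidate previous hypotheses are made \emph{canonical}: $h$ is simulated on the data in strictly ascending order, i.e.\ on the chain of prefix sets $D[k]$ with counter $|D[k]|$, the auxiliary functions $p$ (poisoning upon a witnessed refutation) and $c$ (trimming elements below $\max(D)$ not in $D$) make the resulting conjectures $q(D[k])$ comparable across different presentation orders, and witness-basedness is then verified against this canonical chain (Equation~\eqref{Eq:D_ik_i} and the case analysis in Algorithm~\ref{Algo:PsdWbEx}). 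Without such a canonicalisation your predicate $Q$ has nothing well-defined to compare against, and the witness condition cannot even be stated for the actual previous hypothesis output on a given text.

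Two further ingredients of the paper's argument are missing and would also be needed on any repair of your route. First, the paper does not use Theorem~\ref{thm:ThmSynDec} here at all; instead it first normalizes $h$ to be \emph{strongly locking} (the claim with $\hat{h}(D,t) = h(D,2t)$), which is what guarantees that locking information actually appears along the canonical chain being searched. Second, simulating on ascending data with counter tied to $|D[k]|$ breaks the learning of \emph{finite} languages (the original learner may converge only at counter values beyond $|L|$), which the paper handles by the separate $\ind(D[k])$ branches of Algorithm~\ref{Algo:PsdWbEx} together with target-cautiousness. Your sketch addresses neither point, and the convergence/correctness arguments you transcribe from Cases~2.2--2.3 of Theorem~\ref{Thm:AllG} do not go through until the well-definedness problem above is resolved.
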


\begin{proof}
  The direction \ref{PsdWb}$\Rightarrow$\ref{PsdCT} follows immediately. For the other direction, we follow the proof of $[\Txt\Psd\Wb\Ex] = [\Txt\Psd\CautTar\Ex]$, see \citet[Thm.~3.5]{KS16}. Let $h \in \Ia$ be a learner and $\La$ be $\Txt\Psd\CautTar\Ex$-learnable by $h$ which is also defined on $\Txt(\Sa)$. First, we show that we may assume $h$ to be strongly locking. 

  \begin{claim}
    The learner $h$ may be assumed strongly locking.
  \end{claim}
  \begin{proof}[Claim]
    We adapt the proof of \citet{KS16} for partial $\Psd$-learners. For any finite set $D$ and any number $t \geq |D|$, define the learner $\hat{h}(D,t) = h(D,2t)$. As $h \in \Ia$, so is $\hat{h}$. Furthermore, $\hat{h}$ is target-cautious as $h$ is. Let $L \in \La$ and let $T \in \Txt(L)$. Note that $h$ learns $L$ on text $\hat{T}$ defined as, for any $i \in \N$,
    \[
      \hat{T}(2i) = T(i) \wedge \hat{T}(2i + 1) = \#.
    \]
    Furthermore, note that, for any $n \in \N$, 
    \begin{align}
      \hat{h}^*(T[n]) = \hat{h}(\content(T[n]), n) = h(\content(\hat{T}[2n]), 2n) = h^*(\hat{T}[2n]). \label{Claim:StrLock}
    \end{align}
    As $h$ learns $L$ on text $\hat{T}$, there exists $n_0$ such that, for all $n \geq n_0$, we have $h^*(\hat{T}[n]) = h^*(\hat{T}[n_0])$ and $W_{h^*(\hat{T}[n])} = L$. In particular, this holds for all even $n$ and together with Equation~\eqref{Claim:StrLock}, we get that also $\hat{h}$ converges correctly on text $T$.

    It remains to be shown that $\hat{h}$ is strongly locking. Let $L \in \La$ and let $\sigma_0$ be a locking sequence of the starred learner $h^*$. Let $T \in \Txt(L)$. We show that $\hat{h}$ is locking on $T$. Let $n_0 \geq |\sigma_0|$ such that $\content(T[n_0]) \supseteq \content(\sigma_0)$. We show that $(\content(T[n_0]), n_0)$ is a locking information. Let $n \geq n_0$. As
    \begin{align*}
      |\content(T[n]) \setminus \content(\sigma_0)| + |\sigma_0| \leq |\content(T[n])| + |\sigma_0| \leq n + n_0 \leq 2n,
    \end{align*}
    there exists a sequence $\tau \in L^*$ such that $\content(\sigma \concat \tau) = \content(T[n])$ and $|\sigma_0| + |\tau| = 2n$. As $\sigma_0$ is a locking sequence for $h^*$ we get
    \[
      \hat{h}(\content(T[n]), n) = h(\content(T[n]), 2n) = h^*(\sigma \concat \tau) = h^*(\sigma).
    \]
    So, $\hat{h}$ is strongly locking. \claimqedhere \noqed
  \end{proof}

  Now, we build the desired $\Txt\Psd\Wb\Ex$-learner $h'$ which is also witness-based on $\Txt(\Sa)$ in the following way. Following \citet{KS16}, we mimic the strongly locking learner $h$. Given some information, the main idea is to \emph{delay until refutation}, that is, we wait with abandoning a hypothesis until we see some datum witnessing the current hypothesis being incorrect. Secondly, we \emph{search for locking information}, a method often applied when mimicking learners. This way, we assure correct convergence. Furthermore, to overcome the problem of multiple possible previous hypotheses, we assume the information appeared in strictly ascending order without pause symbols, that is, we simulate $h$ on \emph{ascending text}. Next, we \emph{poison} hypotheses when witnessing them being incorrect, that is, once a possible later mind change is witnessed, we stop enumerating more data into this hypothesis. Lastly, we \emph{delay until consistency}, that is, we do not accept a new hypothesis until we see it to be consistent with new data.

  For this sketched strategy to work, we have to overcome two major problems. Firstly, when simulating $h$ on ascending text, we run into problems learning finite languages, as the learner may learn such languages way after all data has been presented to it. We overcome this problem by searching for such mind changes and returning an index for the finite language in question, unless we have found a consistent hypothesis before. As $h$ is target-cautious, once a hypothesis overgeneralizes the information it got, the given information cannot be the final target.

  The second problem is that partially set-driven learners have multiple previous hypotheses. We solve this problem by assuming the information to be presented in strictly ascending order. However, this poses the following difficulty. For example, given the sequence $0,1,3,2$, our new learner may base its hypothesis on $\{0,1,2\}$ without ever considering $\{0,1,3\}$ as possible input. We oppose this problem in the following way. For finite $D$, we only use a conjecture $h(D, |D|)$ when it contains at least $D$ and one additional element. Furthermore, we remove all data from this this conjecture which is less than $\max(D)$, but not in $D$. This way, we may change the conjecture built on $\{0,1,3\}$ as it does not contain the $2$. Additionally, conjectures based on $\{0,1,2\}$ will contain all of $\{0,1,2\}$ as we only use conjectures which are consistent. Since we can determine consistency only in the limit, we have to delay conjectures until consistency.

  In order to define the desired learner, we continue with the formal details. For any given $D, D' \subseteq \N$ and $t' \in \N$ such that $(D, |D|) \preceq (D',t')$, define the predicate 
  \begin{align*}
    \refuted(D,D',t') \Leftrightarrow \exists (D'', t'') \colon (D, |D|) \preceq (D'', t'') \preceq (D', t') \wedge h(D, |D|) \neq h(D'', t'').
  \end{align*}
  The function $\refuted$ tells us whether there is a mind change between $(D, |D|)$ and $(D', t')$. Fix, for any set $S$ and any $x \in \N$, the notation $S_{> x}$ for the set of all elements in $S$ which are larger than $x$. The set $S_{\le x}$ is defined analogously. Furthermore, let $c \in \totalCp$ such that, for any hypothesis $e$ and any finite set $D$,
  \begin{align*}
    W_{c(e,D)} = (W_e)_{> \max(D)} \cup D.
  \end{align*}
  The function $c(e,D)$ enumerates the set $D$ itself and all elements of $W_e$ which are larger than $\max(D)$. Lastly, let $p \in \totalCp$ such that 
  \begin{align*}
    W_{p(D)} = \bigcup_{t \geq 0} \begin{cases} W_{h(D, |D|)}^t, &\falls \neg \refuted(D, D \cup W_{h(D, |D|)}^t, |D| + t +1), \\ \emptyset, &\sonst \end{cases}
  \end{align*}
  The function $p$ enumerates the same as $W_{h(D, |D|)}$ until, if ever, a mind change is witnessed. Then, it stops enumerating more data. We combine the previously introduced functions by letting, for all finite $D \subseteq \N$, $q(D) \coloneqq c(p(D),D)$. The function $q$ does exactly what we explained before: while no mind change is witnessed, it enumerates all of $W_{h(D, |D|)}$ which is larger than any element given and the set $D$ itself as well. 

  To assure convergence later on, we show for all $L \in \La$ and all $k \in \N$, we have that if $(L[k],k)$ is a locking information of $h$ on $L$, then $W_{q(L[k])} = L$. Let $(L[k],k)$ be a locking information. Thus, no refutation will be witnessed on any extending information of the language. Hence, $W_{p(L)} = W_{h(L[k], k)} = L$. Since $L[k]$ contains the information in ascending order, we have $(W_{h(L[k],k)})_{\leq \max(L[k])} = L[k]$. Thus, 
  \[
    W_{q(L[k])} = W_{c(p(L[k]), L[k])} = (W_{h(L[k],k)})_{> \max(L[k])} \cup L[k] = W_{h(L[k],k)} = L.
  \]

  In order to define that learner $h'$ we need the following decision procedure. We show that for $D, D'$ and $t'$ such that $D'[|D|] = D$ and $\refuted(D, D', t')$, there exists an algorithm taking $D, D'$ and $t'$ as input and deciding $D' \subseteq W_{q(D)}$. Namely, the algorithm computes the finite set
  \begin{align*}
    A = \bigcup_{t \leq t' - (|D|+1)} \begin{cases} W_{h(D, |D|)}^t, &\falls \neg \refuted(D, D \cup W_{h(D, |D|)}^t, |D| + t + 1), \\ \emptyset, &\sonst \end{cases}
  \end{align*}
  Now, either $D' \subseteq A$, in which case $D' \subseteq W_{q(D)}$. Otherwise $D' \not \subseteq W_{q(D)}$, as the enumeration would have to stop before enumerating the last element of $D'$ because it would make the mind change visible.

  Lastly, we need the following notation. For any set $D$ and any $k \in \N$, we use $D[k]$ to denote the set of the first $k$ elements of $D$ (in ascending order). If $k > |D|$, we let $D[k] = D$. Furthermore, let $\sigma_D$ be the sequence of elements in $D$ in strictly ascending order without pause-symbols. 

  \begin{algorithm2e}[t]
    \caption{Witness-based learner $h'$.}\label{Algo:PsdWbEx}
    \param{Learner $h$.}
    \input{Finite set $D \subseteq \N$ and $t \geq |D|$.}
    \outoutput{New hypothesis $h'(D,t)$.}
    \For{$k=0$ \KwTo $|D| - 1$ \label{AlgPsdWb_Start_1}}{
      \If{$\refuted(D[k], D, t) \wedge D \subseteq W_{q(D[k])}$ \label{AlgPsdWb_Start_2}}{
        \Return $q(D[k])$ \label{AlgPsdWb_Start_3}
      }
    }
    \If{$\forall k \leq |D| \colon \refuted(D[k], D, t)$ \label{AlgPsdWb_Ind_1}}{
      \Return $\ind(D)$ \label{AlgPsdWb_Ind_2}
    }
    $k_0 \leftarrow \min \{ k \leq |D| : \neg \refuted(D[k], D, t) \}$ \label{AlgPsdWb_knull} \\
    \For{$s = k_0$ \KwTo $t$}{
      \For{$k = k_0$ \KwTo $\min(s,|D|)$}{
        \If{$\forall k' < k_0\colon \refuted(D[k'], D[k], s) \wedge D[k] \not \subseteq W_{q(D[k'])}$}{
          \uIf{$k < |D| \wedge D[k+1] \subseteq W_{q(D[k_0])}^t$ \label{AlgoPsdWb_qCons}}{
            \Return $q(D[k_0])$ \label{AlgPsdWb_final_1}
          }\Else{
            \Return $\ind(D[k])$ \label{AlgPsdWb_final_2}
          }
        }
      }
    }
  \end{algorithm2e}

  Finally, we can formalise $h'$ as in Algorithm~\ref{Algo:PsdWbEx}. First, note that $h \in \Ia$ by construction. We proceed by showing that $\La \subseteq \Txt\Psd\Ex(h')$. For $L \in \La$, we distinguish the following cases.
  \begin{enumerate}
    \itemin{1. Case:} $L$ is finite. We, again, distinguish multiple cases.
    \begin{enumerate}
      \itemin{1.1. Case:} There exist $k < |L|$ and $t_0$ such that, for all $t > t_0$, $\refuted(L[k], L, t)$ and $L \subseteq W_{q(L[k])}$. Let $k_0$ be minimal such. Then, by lines~\ref{AlgPsdWb_Start_1} to~\ref{AlgPsdWb_Start_3}, $h'$ will converge to $q(L[k_0])$ on any text of $L$. We have $L \subseteq W_{q(L[k])} \subseteq W_{h(L[k], k)}$. 
      As $h$ is target-cautious, we get $L = W_{h(L[k],k)}$ and, thus, $L=W_{q(L[k])}$ as desired.
      \itemin{1.2. Case:} There exists $t_0$ such that, for all $k < |L|$ and $t > t_0$, we have $\refuted(L[k], L, t)$ and $L \not \subseteq W_{q(L[k])}$. In this case, Algorithm~\ref{Algo:PsdWbEx} does not halt before line~\ref{AlgPsdWb_Ind_1}. Then, if $\refuted(L,L,t)$, the algorithm outputs $\ind(L)$ from lines~\ref{AlgPsdWb_Ind_1} and~\ref{AlgPsdWb_Ind_2}. Otherwise, the algorithm does not terminate before line~\ref{AlgPsdWb_knull} and we have $k_0 = |L|$. Then, the algorithm outputs $\ind(L)$ from line~\ref{AlgPsdWb_final_2}. 
      \itemin{1.3. Case:} None of any of the previous cases applies. Then, as Case~1.1 does not apply, the algorithm does not terminate within lines~\ref{AlgPsdWb_Start_1} to~\ref{AlgPsdWb_Start_3}. Furthermore, since additionally Case~1.2 does not apply either, the algorithm does not terminate before line~\ref{AlgPsdWb_knull} and we have that $k_0 < |L|$. Moreover, for any $k$ with $k_0 \leq k \leq |L|$, we have $h(L[k], k) = h(L[k_0], k_0)$ as the hypotheses are not refuted. In particular, $h(L[k_0], k_0)$ is the final hypothesis of $h$, implying $W_{h(L[k_0], k_0)} = L$ and $W_{q(L[k_0])} = L$. Let $s$ be minimal such that there exists $k_1$ with $k_0 \leq k_1 \leq |L|$ and 
      $$\forall k' < k_0 \colon \refuted(L[k'], L[k_1], s) \wedge L[k_1] \not \subseteq W_{q(L[k'])}.$$ 
      Let $k_1$ be minimal such. As $L = W_{q(L[k_0])}$, we have that for all $t$ large enough $L = W_{q(L[k_0])}^t$. For the if-clause in line~\ref{AlgoPsdWb_qCons}, we distinguish the following cases. If $k_1 = |L|$, this shows convergence of $h'$ to $\ind(L[k_1])$. If $k_1 < |L|$, then $h'$ converges to $q(L[k_0])$.
    \end{enumerate}
    \itemin{2. Case:} $L$ is infinite. Let $T$ be the canonical text for $L$. Let $k_T$ be minimal such that $(T[k_T], k_T)$ is a locking information for $h$ on $L$. For $D,t$, with $\content(T[k_T]) \subseteq D \subseteq L$ and $t \geq k_T$, large enough, we have for all $k'<k_T$ that $\refuted(L[k'], D, t)$. Since poisoning either stops the enumeration of data (making $W_{q(L[k'])}$ finite) or some data from $L$ is missing in $W_{q(L[k'])}$, we have $L \not \subseteq W_{q(L[k'])}$. Thus, once $D$ contains enough data, the algorithm does not halt on lines~\ref{AlgPsdWb_Start_1} to~\ref{AlgPsdWb_Start_3}.

    For all $k \geq k_T$ and all $D \subseteq L, t \geq k_T$ large enough, we have $\neg \refuted(L[k], D, t)$, $h(L[k], k) = h(L[k_T], k_T)$ and $W_{q(L[k_T])} = L$. Thus, the algorithm does not halt before line~\ref{AlgPsdWb_knull} with $k_0 = k_T$. Let $s$ be minimal such that there exists $k_1$ with $k_T \leq k_1 \leq s$ and 
    $$\forall k' < k_T \colon \refuted(D[k'], D[k_1], s) \wedge D[k_1] \not \subseteq W_{q(D[k'])}.$$ 
    Let $k_1$ be minimal such. As $L = W_{q(L[k_T])}$ and $L$ is infinite, we have that, for all $t$ large enough, $D[k_1 + 1] \subseteq W_{q(L[k_1])}^t$. Thus, by line~\ref{AlgPsdWb_final_1}, $h'$ converges to $q(L[k_T])$, an index for $L$.
  \end{enumerate}
  Finally, we show that $h'$ is witness-based on $\Txt(\La)$ and $\Txt(\Sa)$. Let $T$ be an according text. For all $i \in \N$, let $D_i = \content(T[i])$. Furthermore, let $k_i$ be such that Algorithm~\ref{Algo:PsdWbEx} on information $(D_i, i)$ returns either $\ind(D_i[k_i])$ or $q(D_i[k_i])$. We first note that, for all $i \in \N$, we have
  \begin{align}
    (W_{h'(D_i, i)})_{\leq \max(D_i[k_i])} = D_i[k_i]. \label{Eq:D_ik_i}
  \end{align}
  Intuitively, $h'(D_i,i)$ is consistent on $D_i[k_i]$ and does not contain any other elements which are smaller than any element thereof. This is immediate if the output is $\ind(D_i[k_i])$. When the output is $q(D_i[k_i])$, the corresponding consistency has to be witnessed, see line~\ref{AlgPsdWb_Start_2} and line~\ref{AlgoPsdWb_qCons}, respectively. Smaller elements cannot be enumerated since $q$ applies function $c$. This proves that Equation~\ref{Eq:D_ik_i} holds.

  Now, let $a,b$ and $d$, with $a < d \leq b$, be such that $h'(D_a,a) \neq h'(D_d, d)$. We show 
  \[
    (D_b \cap W_{h'(D_b, b)}) \setminus W_{h'(D_a,a)} \neq \emptyset.
  \]
  We distinguish the following cases.
  \begin{enumerate}
    \itemin{1. Case:} $D_b[k_b] \subsetneq D_a[k_a]$. It is straightforward to verify that, by Algorithm~\ref{Algo:PsdWbEx}, the only way this is possible is by having $h'(D_b,b) = q(D_b[k_b])$ from line~\ref{AlgPsdWb_final_1}, while, for $k \geq k_b$ and therefore $k \geq k_a$, $D_b[k+1] \subseteq W_{q(D_b[k_b])}$ as desired.
    \itemin{2. Case:} $\exists x \in D_b \setminus D_a[k_a] \, \exists y \in D_a[k_a] \colon x < y$. Let $x$ be minimal such. Using Equation~\eqref{Eq:D_ik_i} it suffices to show that $x \in D_b[k_b]$. Assume the opposite. We get that $D_b[k_b] \subsetneq D_a[k_a]$. Thus, as in the previous case, we get $h'(D_b,b) = q(D_b[k_b])$ from line~\ref{AlgPsdWb_final_1} and $h'(D_a,a) = \ind(D_a[k_a])$ from line~\ref{AlgPsdWb_final_2}. This yields a contradiction, as the hypothesis on $D_b[k_b]$ was rejected when producing the output $h'(D_a, a)$ (as $k_a > k_b$).
    \itemin{3. Case:} $D_a[k_a] = D_b[k_b]$ and $k_a \leq k_b$. Here, we distinguish four cases depending on what line in Algorithm~\ref{Algo:PsdWbEx} led to the output of $h'(D_a,a)$. 
    \begin{enumerate}
      \itemin{3.1. Case:} $h'(D_a,a) = q(D_a[k_a])$ from line~\ref{AlgPsdWb_Start_3}. We consider Algorithm~\ref{Algo:PsdWbEx} on input $(D_b, b)$. If $\refuted(D_a[k_a], D_b, b) \wedge D_b \subseteq W_{q(D_a[k_a])}$, then there was no mind change between $h'(D_a, a)$ and $h'(D_b, b)$, a contradiction to what we assumed. As $\refuted(D_a[k_a],D_a,a)$ and the predicate $\refuted$ is monotone in its second and third component, we get $D_b \not\subseteq W_{q(D_a[k_a])}$. This shows $k_b > k_a$. Thus, if $h'(D_b, b) = \ind(D_b)$ (from line~\ref{AlgPsdWb_Ind_2}), we get $(D_b \cap W_{h(D_b, b)}) \setminus W_{h(D_a, a)} = D_b \setminus W_{h(D_a, a)} \neq \emptyset$ as desired. 

      Otherwise, we get that $k_b$ is such that, with $k_0$ as chosen by Algorithm~\ref{Algo:PsdWbEx} on $(D_b, b)$, 
      \[
        \forall k' < k_0 \colon \refuted(D_b[k'], D_b[k_b], s) \wedge D_b[k_b] \not\subseteq W_{q(D_b[k'])}.
      \]
      Since we assumed a mind change between $h'(D_a, a)$ and $h'(D_b, b)$, we get $k_a < k_0$. Thus, we have $D_b[k_b] \not\subseteq W_{q(D_a[k_a])}$, while $D_b[k_b] \subseteq W_{h'(D_b, b)}$ from Equation~\eqref{Eq:D_ik_i}, as desired.
      \itemin{3.2. Case:} $h'(D_a,a) = \ind(D_a[k_a])$ from line~\ref{AlgPsdWb_Ind_2}. Particularly, it holds that $D_a = D_a[k_a]$. If $D_b = D_a$, there could not be a mind change between $h'(D_a, a)$ and $h'(D_b, b)$. Hence, $D_a \subsetneq D_b$. Now, we also have $k_b > k_a$. By Equation~\eqref{Eq:D_ik_i}, we get $D_b[k_b] \subseteq W_{h'(D_b, b)}$. Altogether, as desired we have
      \[
        \left((D_b \cap W_{h'(D_b, b)}) \setminus W_{h'(D_a, a)} \right) \supseteq \left( D_b[k_b] \setminus D_b[k_a] \right) \neq \emptyset.
      \]
      \itemin{3.3. Case:} $h'(D_a,a) = q(D_a[k_a])$ from line~\ref{AlgPsdWb_final_1}. By assumption, $q(D_a[k_a])$ was abandoned until $(D_b, b)$. Hence, we have $\refuted(D_a[k_a], D_b, b)$ and $D_b \not\subseteq W_{q(D_a[k_a])}$. If $h'(D_b, b)$ comes from lines~\ref{AlgPsdWb_Start_3} or~\ref{AlgPsdWb_Ind_2}, we have $D_b \subseteq W_{h'(D_b, b)}$ as desired. Otherwise, if $h'(D_b, b)$ comes from lines~\ref{AlgPsdWb_final_1} or~\ref{AlgPsdWb_final_2}, by definition, we an output which contains elements not in $W_{q(D_a[k_a])}$ as desired.
      \itemin{3.4. Case:} $h'(D_a,a) = \ind(D_a[k_a])$ from line~\ref{AlgPsdWb_final_2}. This case is almost completely analogous to the previous, except if, for some $k_0$, $h'(D_b, b) = q(D_b[k_0])$ from line~\ref{AlgPsdWb_final_1} and $k_a = k_b$. In this case, we have
      \[
        \left( (D_b \cap W_{h'(D_b, b)}) \setminus W_{h'(D_a, a)} \right) \supseteq \left( D_b[k_b +1] \setminus D_b[k_a] \right) \neq \emptyset.
      \] 
      This is what we desired for. \qedhere
    \end{enumerate}
  \end{enumerate} \noqed
\end{proof}

This result shows that a similar situation as for $\G$-learners holds true for $\Psd$-learners. Admittedly, it is not true that every $\Psd$-learner may be assumed total, as for example is shown by \citet{KS16} who provide an example of such a learner. However, the work of \citet{DoskocK20} shows that, in particular, Gold-style target-cautious learners may be assumed partially set-driven. This result does not suffice for our needs however, as it is only stated for (possibly) partial learners. We extend this result within our framework to fit our needs and obtain a powerful normal form for partially set-driven witness-based learners, see Corollary~\ref{Coro:NF-PsdEx}.

\begin{theorem}
  Let $\Ia$ be an {\Rmonoid} and $\Sa \subseteq \Pow(\N)$ closed under subsets. Furthermore, let $\La$ be a class of languages. Then, the following are equivalent.
  \begin{enumerate}[label=\textnormal{(\arabic*)}]
    \item $\La$ can be $\Txt\G\CautTar\Ex$-learned by an $\Ia$-learner which is defined also on $\Txt(\Sa)$.\label{Glearner}
    \item $\La$ can be $\Txt\Psd\CautTar\Ex$-learned by an $\Ia$-learner which is defined also on $\Txt(\Sa)$.\label{Psdlearner}
  \end{enumerate}
\end{theorem}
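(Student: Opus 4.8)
The direction \ref{Psdlearner}$\Rightarrow$\ref{Glearner} is immediate and I would dispatch it first. Given an $\Ia$-learner $h$ as in~\ref{Psdlearner}, define the $\G$-learner $h'$ by $h'(\sigma) \coloneqq h(\content(\sigma), |\sigma|)$. Since $\sigma \mapsto (\content(\sigma),|\sigma|)$ is total computable and $(\Ia,\circ)$ is a monoid with $\totalCp \subseteq \Ia$, we get $h' \in \Ia$. On every text $T$ the learning sequences $\G(h',T)$ and $\Psd(h,T)$ coincide as functions, so $h'$ inherits $\Ex$-convergence on $\La$, target-cautiousness on texts of $\La$-languages, and definedness on $\Txt(\Sa)$ from $h$ verbatim.

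For \ref{Glearner}$\Rightarrow$\ref{Psdlearner} I would adapt the construction of \citet{DoskocK20}, who proved the analogous statement for plain partial learners, and check that it survives in the present, more general, setting. Let $h \in \Ia$ witness~\ref{Glearner}. Define the $\Psd$-learner $h'$ on information $(D,t)$ to be $h(\sigma_{D,t})$, where $\sigma_{D,t}$ is the $\leq$-least sequence $\sigma$ with $\content(\sigma) \subseteq D$ and $|\sigma| \le t$ that \emph{looks locked on $(D,t)$}, i.e.\ satisfies $h(\sigma)=h(\tau)$ for all $\tau$ with $\sigma \subseteq \tau$, $\content(\tau) \subseteq D$ and $|\tau| \le t$. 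Such a $\sigma$ always exists --- the canonical sequence of $D$, padded with pause symbols to length $t$, qualifies trivially --- so the search succeeds, and $h'$ is defined, whenever $h$ is defined on the relevant finitely many sequences. On texts of $\La$ this is the case because $h$ learns $\La$ and hence is defined on every finite sequence whose content is contained in some language of $\La$; on $\Txt(\Sa)$ it is the case because $h$ is assumed defined there and $\Sa$ is closed under subsets. That $h' \in \Ia$ is argued exactly as in the proofs of Theorems~\ref{Thm:AllG} and~\ref{Thm:AllPsd}.

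The heart of the matter is $\Ex$-convergence. Fix $L \in \La$ and $T \in \Txt(L)$, write $(D,t) = \Psd(T[t])$, and fix a locking sequence $\sigma_0$ of $h$ on $L$ (which exists since $h$ Gold-style $\Ex$-learns $L$). Once $D \supseteq \content(\sigma_0)$ and $t \ge |\sigma_0|$, the sequence $\sigma_0$ is itself a legal candidate on $(D,t)$, so $\sigma_{D,t} \le \sigma_0$, and therefore $\sigma_{D,t}$ ranges over the \emph{finite} set $\{\sigma : \sigma \le \sigma_0\}$. Within this finite set, every $\sigma$ with $\content(\sigma) \subseteq L$ falls, from some point on, into one of two stable alternatives: either $h$ never changes its mind on $L$-extensions of $\sigma$ --- which, $h$ being an $\Ex$-learner of $L$, forces $W_{h(\sigma)} = L$, so $\sigma$ is a genuine locking sequence and, the content condition being monotone and the ``looks locked'' test anti-monotone in $(D,t)$, eventually a permanent candidate --- or some finite extension of $\sigma$ witnesses a mind change and thus permanently destroys the test as soon as it fits inside $(D,t)$. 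Consequently $\sigma_{D,t}$ stabilises to a fixed genuine locking sequence $\hat\sigma$ of $h$ on $L$, so $h'$ syntactically converges, on every text of $L$, to $h(\hat\sigma)$ with $W_{h(\hat\sigma)} = L$. Target-cautiousness then follows for free: at every step $h'(T[t]) = h(\sigma_{D,t})$ with $\content(\sigma_{D,t}) \subseteq \content(T[t]) \subseteq L$, and since $\sigma_{D,t}$ is an initial segment of some text of $L \in \La$, target-cautiousness of $h$ gives $\neg(L \subsetneq W_{h(\sigma_{D,t})})$.

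The step I expect to be most delicate is not the convergence bookkeeping but keeping the simulating learner inside the prescribed {\Rmonoid}: unlike in the easy direction, $h'$ probes $h$ on many inputs during its search rather than being a plain composition of $h$ with fixed total computable maps, so the argument that $h' \in \Ia$ has to be run carefully in the style already used for Theorems~\ref{Thm:AllG} and~\ref{Thm:AllPsd}. Relatedly, one has to be mindful that on inputs outside $\Txt(\La) \cup \Txt(\Sa)$ the learner $h'$ is permitted to --- and, when $\Ia = \totalCp$, must not --- diverge, which is precisely why the existence of a trivially ``looks locked'' sequence is invoked above.
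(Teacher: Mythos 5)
Your proposal is correct and takes essentially the same route as the paper's proof: the easy direction via the total computable recoding $\sigma \mapsto (\content(\sigma),|\sigma|)$, and the hard direction via the same search, on input $(D,t)$, for the $\leq$-least sequence in $D_\#^{\leq t}$ that ``looks locked'' with respect to $(D,t)$ (the paper's set $M_{D,t}$, generalizing \citet[Thm.~2]{DoskocK20}), with $\Ex$-convergence argued through stabilization on a genuine locking sequence, definedness on $\Txt(\Sa)$ from subset-closure, and $\CautTar$ inherited because $h'$ only outputs $h$-values on sequences whose content lies in the target language. The differences (always-nonempty candidate set instead of the paper's $h(\varepsilon)$ fallback, stabilizing the minimum rather than fixing the minimal locking sequence in advance) are cosmetic, and your remark about keeping $h'$ in $\Ia$ is treated no more explicitly in the paper itself.
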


\begin{proof}
  We generalize the proof of $[\Txt\G\CautTar\Ex] = [\Txt\Psd\CautTar\Ex]$ as can be found in \citet[Thm.~2]{DoskocK20}. The direction \ref{Psdlearner}$\Rightarrow$\ref{Glearner} is immediate. For the other, let $h \in \Ia$ be a learner which $\Txt\G\CautTar\Ex$-learns $\La$ and is defined on $\Txt(\Sa)$. To ensure correct learning on $\Txt(\La)$, we define a learner $h'$ to search for the minimal, possible locking sequence given a finite set $D$ and $t \geq 0$ as information. This will also maintain $h'$ being defined on $\Txt(\Sa)$. Formally, with $D_\#^{\leq t}$ being the set of all sequences of elements in $D_\# \coloneqq D \cup \{ \# \}$ of at most length $t$, we define $h'$ as
  \begin{align*}
    M_{D,t} &\coloneqq \left\{ \sigma \in D_\#^{\leq t} \mid \forall \tau \in D_\#^{\leq t} \colon h(\sigma) = h(\sigma \tau) \right\}, \\
    h'(D,t) &\coloneqq \begin{cases} h \left(\min(M_{D,t})\right), &\falls M_{D,t} \neq \emptyset, \\ h(\varepsilon), &\sonst \end{cases}
  \end{align*}
  Obviously, if $h$ is defined on $T \in \Txt(\Sa)$ then so is $h'$. Next, we show that $h'$ $\Txt\G\CautTar\Ex$-learns $\La$. Let $L \in \La$ and let $T \in \Txt(L)$. By \citet{BlumBlum75} there exists a locking sequence $\sigma$ for $h$ on $L$. Let $\sigma_0$ be a minimal such locking sequence. Now, let $n_0$ be large enough such that, with $D_0 \coloneqq \content(T[n_0])$ for notational convenience, 
  \begin{itemize}
    \item $\content(\sigma_0) \subseteq D_0$, 
    \item $| \sigma_0 | \leq n_0$ and 
    \item for all $\sigma' < \sigma_0$ there exists $\tau' \in (D_0)_\#^{\leq n_0}$ witnessing $\sigma' \notin M_{D_0, n_0}$. 
  \end{itemize}
  Then, $\min(M_{D_0, n_0}) = \sigma_0$. Thus, for $n \geq n_0$ we have $h'(\content(T[n]),n) = h(\sigma_0)$, and $W_{h'(\content(T[n]),n)} = W_{h(\sigma_0)} = L$. Thus, $L \in \Txt\Psd\Ex(h')$. As $h'$ mimics $h$ on sequences in $L^*$, we also have that $h'$ is $\CautTar$.
\end{proof}

\begin{corollary}\label{Coro:NF-PsdEx}
  We have that $[\tau(\Wb)\Txt\Psd\Ex] = [\Txt\G\CautTar\Ex]$.
\end{corollary}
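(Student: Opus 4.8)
The plan is to read the equality off as a short chain of equivalences, instantiating the three preceding results at $\Ia = \partialCp$ and $\Sa = \Pow(\N)$; both instantiations are legitimate, since $\partialCp$ is a {\Rmonoid} and $\Pow(\N)$ is trivially closed under subsets. The conceptual point that makes the pieces fit is that, for the interaction operators $\Psd$ and $\G$, requiring a restriction ``also on $\Txt(\Pow(\N))$'' is the same as requiring it on \emph{every} text (since $\Txt(\Pow(\N)) = \Txt$), and a learner ``defined also on $\Txt(\Pow(\N))$'' is simply a total one. Consequently $[\tau(\Wb)\Txt\Psd\Ex]$ is, by unwinding the definition of a learning criterion, exactly the collection of classes witnessed by a $\Psd$-learner that is witness-based on all texts and explanatory on texts of its target languages, which is precisely statement~\ref{PsdWb} of Theorem~\ref{Thm:AllPsd} taken with $\Sa = \Pow(\N)$.

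Concretely, I would proceed as follows. First, apply Theorem~\ref{Thm:AllPsd} with $\Ia = \partialCp$ and $\Sa = \Pow(\N)$: this identifies $[\tau(\Wb)\Txt\Psd\Ex]$ with the collection of classes $\Txt\Psd\CautTar\Ex$-learnable by a $\partialCp$-learner that is in addition defined on every text. Second, apply the preceding theorem (the equivalence of $\Txt\G\CautTar\Ex$- and $\Txt\Psd\CautTar\Ex$-learnability for learners defined also on $\Txt(\Sa)$), again with $\Ia = \partialCp$ and $\Sa = \Pow(\N)$, to rewrite this as the collection of classes $\Txt\G\CautTar\Ex$-learnable by a $\partialCp$-learner defined on every text, i.e.\ by a total learner; since a total $\G$-learner is exactly a member of $\totalCp$, this collection is $[\totalCp\Txt\G\CautTar\Ex]$. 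Third, observe that $\CautTar \wedge \Ex$ is delayable (both conjuncts are, and delayability is preserved under conjunction straight from the definition), so the $\G$-part of Theorem~\ref{Thm:DelTotal} yields $[\totalCp\Txt\G\CautTar\Ex] = [\Txt\G\CautTar\Ex]$. Composing the three steps gives $[\tau(\Wb)\Txt\Psd\Ex] = [\Txt\G\CautTar\Ex]$.

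I do not expect a genuine obstacle: all the substance lives in the three results already in hand, and what remains is careful bookkeeping with the notation. The two places that need attention are (i)~checking that the witness-based demand coming from $\Txt\Psd\Wb\Ex$ together with ``$\Wb$ also on $\Txt(\Pow(\N))$'' really collapses to the global restriction $\tau(\Wb)$, and that ``defined also on $\Txt(\Pow(\N))$'' coincides with totality for both $\Psd$- and $\G$-learners; and (ii)~the routine verification that $\CautTar \wedge \Ex$ is delayable, which is what licenses the use of Theorem~\ref{Thm:DelTotal} for $\beta = \G$. If one wishes to sidestep (ii), one may instead appeal directly to the standard fact that Gold-style $\Ex$-learners can be totalised without affecting which classes they learn.
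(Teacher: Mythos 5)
Your derivation is correct and is essentially the paper's intended route: the corollary is stated without separate proof precisely because it follows by instantiating Theorem~\ref{Thm:AllPsd} and the $\G$--$\Psd$ target-cautious equivalence at $\Sa = \Pow(\N)$ (the paper uses $\Ia = \totalCp$ in its surrounding discussion, you use $\partialCp$ plus global definedness -- an immaterial difference), and then dropping totality on the $\G$ side via Theorem~\ref{Thm:DelTotal} / the cited observation of Kötzing and Palenta. Your bookkeeping points (i) and (ii), including that $\CautTar\Ex$ is delayable so totalisation preserves it, are exactly the checks the paper implicitly relies on.
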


This result cannot be extended to set-driven learners, as these are known to be weaker than partially set-driven ones, see \citet{KS95} or \citet{DoskocK20}. Another difference is that set-driven learners may be assumed conservative, weakly monotone and cautious without loss of generality, compare the results of \citet{KS95} and \citet{KP16}. Nonetheless, we again provide a general result including witness-based learners.

\begin{theorem}\label{Thm:AllSd}
  Let $\Ia$ be an {\Rmonoid} and $\Sa \subseteq \Pow(\N)$ closed under subsets. Let $\La$ be a class of languages. Then, the following are equivalent.
  \begin{enumerate}[label=\textnormal{(\arabic*)}]
    \item $\La$ can be $\Txt\Sd\Wb\Ex$-learned by an $\Ia$-learner which is $\Wb$ also on $\Txt(\Sa)$.\label{SdWb}
    \item $\La$ can be $\Txt\Sd\Ex$-learned by an $\Ia$-learner which is defined also on $\Txt(\Sa)$.\label{SdT}
  \end{enumerate}
\end{theorem}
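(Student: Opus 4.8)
The plan is to prove Theorem~\ref{Thm:AllSd} by adapting the construction used for $\Psd$-learners in Theorem~\ref{Thm:AllPsd}, but exploiting the fact that for set-driven learners the conclusion is stronger: we only assume $\La$ is $\Txt\Sd\Ex$-learnable (no target-cautiousness needed), and obtain witness-based learning. The direction \ref{SdWb}$\Rightarrow$\ref{SdT} is immediate since $\Wb$ implies $\Ex$-convergence to a single hypothesis and we drop the restriction. For the converse, let $h \in \Ia$ be a set-driven learner that $\Txt\Sd\Ex$-learns $\La$ and is defined on $\Txt(\Sa)$.

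First I would establish that $h$ can be assumed strongly locking, by an argument parallel to the Claim inside the proof of Theorem~\ref{Thm:AllPsd} (or directly invoking the corresponding known normal form for $\Sd$-learners from \citet{KP16}); since $\Ia$ is a $\totalCp$-monoid, composing $h$ with the relevant total computable functions keeps the learner in $\Ia$, and the transformation preserves being defined on $\Txt(\Sa)$. Next I would apply Theorem~\ref{thm:ThmSynDec} with $\beta = \Sd$ and $\delta = \delta' = \T$ (or the appropriate pseudo-semantic restrictions) to assume $h$ is syntactically decisive on $\Txt(\La)$ and $\Txt(\Sa)$. The core construction then mirrors Theorem~\ref{Thm:AllPsd}: define a total $p$ that enumerates $W_{h(D)}$ while ``delaying until refutation'' — stopping enumeration into the hypothesis built on $D$ as soon as a mind change of $h$ is witnessed on some extension within $D$ — and then have $h'$ on input $D$ search, over subsets $D' \subseteq D$ in order, for the least locking set candidate: it returns $p(D')$ for the minimal $D'$ such that no refutation has been witnessed from $D'$ onwards and such that $D$ is enumerated into $W_{p(D')}$; if every candidate has been refuted, $h'$ returns $\ind(D)$ (this handles finite languages, just as in the $\Psd$ case). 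Crucially, because there is no ordering information, ``poisoning'' is simpler here: $h'(D)$ depends only on $\content$, so the awkward ascending-text simulation and the $D[k]$ bookkeeping of Theorem~\ref{Thm:AllPsd} collapse — one simply ranges over the subsets of $D$.

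I would then verify the three required properties. For $\Ex$-convergence: on a text of $L \in \La$, eventually $D$ stabilizes at $L$ (if $L$ finite) or grows through $L$ (if $L$ infinite); using that $h$ is strongly locking, some finite $D_0 \subseteq L$ is a locking set, so no refutation is ever witnessed from $D_0$ on, whence $W_{p(D_0)} = W_{h(D_0)} = L$, and $h'$ converges to (a fixed padded version of) $p(D_0)$. For the finite case one argues as in the $\Psd$ proof that either a $q$-style consistent hypothesis is found or $\ind(L)$ is returned; the delicate point, as there, is ruling out convergence to a proper-subset index, which follows since such an index would be refuted once the missing element appears. For witness-based behaviour on $\Txt(\La)$ and $\Txt(\Sa)$: whenever $h'$ changes its mind between content $D_a$ and content $D_b \supseteq D_a$, the new hypothesis contains all of $D_b$ (since $p(D') \supseteq \content(D')$ and $D_b$ was witnessed to be enumerated, or the output is $\ind(D_b)$), while the old hypothesis either was built on a strictly smaller subset that fails to contain some element of $D_b$, or was refuted and ``poisoned'' so that it omits some element of $D_b$ — in every case exhibiting a witness in $(\content(D_b) \cap W_{h'(D_b)}) \setminus W_{h'(D_a)}$.

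I expect the main obstacle to be the witness-based verification in the finite-language subcase, exactly as in Theorem~\ref{Thm:AllPsd}: one must carefully track which of the several return statements produced the previous and the current hypothesis and, in each combination, pin down a concrete element witnessing the mind change — in particular the case where the previous output was $\ind$ of some subset and the current output is a $p$-based hypothesis on a still smaller subset that has since become consistent. Making the ``delay until consistency'' step decidable (the analogue of the decision procedure for $D' \subseteq W_{q(D)}$ in the $\Psd$ proof) also requires a small auxiliary argument, but it carries over verbatim. The remaining steps are routine once the $\Psd$ machinery is specialized.
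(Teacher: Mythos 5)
Your plan has a genuine gap at exactly the point you dismiss as carrying over ``verbatim''. In the $\Psd$ construction, every search is bounded by the time component $t$ of the input: the consistency check $D[k+1] \subseteq W_{q(D[k_0])}^t$ and the outer loops are all cut off at $t$, and correctness is obtained by \emph{delaying until consistency}, i.e.\ waiting for larger $t$ on the same content. A set-driven learner has no time parameter, so your selection rule --- return $p(D')$ for the minimal unrefuted $D' \subseteq D$ with $D \subseteq W_{p(D')}$ --- asks $h'$ to verify a condition that is only semi-decidable for an unrefuted candidate (the decision procedure in the $\Psd$ proof works precisely because the candidate \emph{is} refuted, which bounds the enumeration). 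If the minimal unrefuted candidate is inconsistent ($h$ need not be consistent), the search hangs, and it hangs on inputs $D$ that are finite subsets of a target language, where $h'$ must be defined; there is no larger $t$ to rescue you. A second problem is the $\Wb$ verification itself: your rule abandons a candidate as soon as it is refuted, but a refuted, poisoned hypothesis $W_{p(D'_a)}$ may still contain \emph{all} data seen so far (poisoning only stops future enumeration, it does not remove elements), so the switch to a new candidate or to $\ind(D_b)$ is a mind change without any witness in $(D_b \cap W_{h'(D_b)}) \setminus W_{h'(D_a)}$. The $\Psd$ algorithm avoids this with its first loop (keep a refuted hypothesis as long as it remains consistent), which your sketch drops; your claim that poisoning ``omits some element of $D_b$'' is exactly what fails.

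These obstacles are why the paper does not specialize the $\Psd$/\citet{KS16} machinery but instead follows \citet{KS95}: it defines $p$ so that \emph{all} consistency, definedness and mind-change checks happen inside the enumeration of $W_{p(D)}$ (which, beyond $D$ itself, only enumerates elements larger than $\max(D)$ and stops before any mind change among sets between $D$ and the enumerated part becomes visible), and then sets $h'(D) = p(D[k_D])$, where $k_D$ is minimal such that $h$ makes no mind change on the interval $[D[k_D], D]$. Thus $h'$ itself evaluates $h$ only on finitely many subsets of $D$ (all defined on the relevant texts), no consistency test is performed by $h'$, no strongly-locking normal form is needed (note also that the time-padding trick in the Claim of Theorem~\ref{Thm:AllPsd} has no $\Sd$ analogue, though $\Sd$-learners are automatically strongly locking once a locking set is covered), and the witnesses for mind changes come from the structure of $p$ together with syntactic decisiveness via Theorem~\ref{thm:ThmSynDec}. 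Your observation that target-cautiousness can be dropped is correct in spirit --- that is what the theorem asserts --- but your identified ``delicate point'' (proper-subset indices) is not the real one; the construction has to rule out converging to a poisoned, overgeneralizing $p$-hypothesis, which the paper handles through the $>\max(D)$ restriction and the choice of the locking set with no missing small elements.
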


\begin{proof}
  The direction \ref{SdWb}$\Rightarrow$\ref{SdT} is immediate. For the other, we follow the proof of $[\Txt\Sd\Ex] = [\Txt\Sd\Conv\Ex]$, see \citet[Thm.~7.1]{KS95}. Let $\La$ be $\Txt\Sd\Ex$-learned by an $\Ia$-learner $h$ which is defined on $\Txt(\Sa)$. By Theorem~\ref{thm:ThmSynDec}, we may assume $h \in \Ia$ to be syntactically decisive on $\Txt(\La)$ and $\Txt(\Sa)$. We show that $\La$ can be $\Txt\Sd\Wb\Ex$-learned by an $\Ia$-learner $h'$ which is $\Wb$ on $\Txt(\Sa)$. 

  In order to define learner $h'$, we need the following auxiliary function. For a set $D$ and $x \in \N$, we write $D_{> x}$ for the set of all elements in $D$ which are larger than $x$. Analogously, we use the notation $D_{< x}$ for the set of all elements in $D$ which are less than $x$. Then, for finite $D \subseteq \N$, we define $p$ as
  \[
    W_{p(D)} = D \cup \bigcup_{t \in \N} 
    \begin{cases} 
      \emptyset, &\falls \neg (D \subseteq W_{h(D)}^t), \\ 
      \emptyset, &\sonstfalls \exists D', D \subseteq D' \subseteq W_{h(D)}^t\colon \\ 
      &\phantom{\text{else}} \bullet h(D') \text{ is not defined after } t \text{ steps, or} \\
      &\phantom{\text{else}} \bullet \exists D'', D \subseteq D'' \subseteq W_{h(D)}^t \colon h(D) = h(D') \neq h(D''), \\
      (W_{h(D)}^t)_{> \max(D)}, &\sonst
    \end{cases}
  \]
  Now, we define $h'$. For finite $D \subseteq \N$ and $k \in \N$, we write $D[k]$ for the set of the $k$ smallest elements in $D$. If $k > |D|$, then $D[k] = D$. For finite $D\subseteq \N$, let $k_D \leq |D|$ be minimal such that, for all $D' \in [D[k_D], D]$, $h(D') = h(D)$. Then, we define
  \[
    h'(D) = p(D[k_D]).
  \]
  Intuitively, $h'$ mimics $h$ on the smallest set (sorted in ascending order) on which no mind change is witnessed. There, it only enumerates larger elements which do not cause a mind change.

  We show that $h'$ $\Txt\Sd\Wb\Ex$-learns $\La$ and is witness-based on $\Txt(\Sa)$. First, we show that $h'$ $\Txt\Sd\Ex$-learns $\La$. Let $L \in \La$ and let $D_0 \subseteq L$ be a locking set for $h$ on $L$. Without loss of generality, we may assume that $(L \setminus D_0)_{< \max(D_0)} = \emptyset$, that is, there are no elements smaller than $\max(D_0)$ which are in $L$ but not in $D_0$. Let $D$ such that $D_0 \subseteq D \subseteq L$. We show that $W_{h'(D)} = L$. First, note that we have that $p(D[k_D]) = p(D_0[k_{D_0}])$. Hence, by definition, we get $h'(D) = p(D_0[k_{D_0}])$. Then, for all $D'$ with $D_0 \subseteq D' \subseteq L$, we have that $h(D')$ is defined and $h(D_0) = h(D')$. Thus, $W_{p(D_0[k_{D_0}])} = L$.
 
  Lastly, we show that $h'$ is $\Wb$ on $\Txt(\La)$ and $\Txt(\Sa)$. Let $T$ be an according text. Let $n_1 < n_2$ and $D_1 = \content(T[n_1])$ and $D_2 = \content(T[n_2])$ with $h'(D_1) \neq h'(D_2)$. Let $n_3 > n_2$ with $D_3 = \content(T[n_3])$. For $i \in \{1,2,3\}$, let $D_i'$ be such that $h'(D_i) = p(D_i')$. We show that 
  \[
    (D_3 \cap W_{p(D_3')}) \setminus W_{p(D_1')} \neq \emptyset.
  \]
  We distinguish the following cases.
  \begin{enumerate}
    \itemin{1. Case:} $D_1' \subseteq D_3'$. Assume $W_{p(D_1')} \supseteq D_3'$. Then, by definition of $p$, for $D'' = D_3'$ we have that $D_1' \subseteq D'' \subseteq W_{p(D_1')}$ and $h(D_1') = h(D'')$, a contradiction.
    \itemin{2. Case:} $D_3' \setminus D_1' \neq \emptyset$, but not $D_1' \subseteq D_3'$. Let $X = D_3' \setminus D_1'$. By definition of $D_3'$, we have that $h(D_3') = h(D_3)$. Note that these are not equal to $h(D_1') = h(D_1)$ as $h$ is syntactically decisive. Thus, as $D_3' = D_1' \cup X$ but $h(D_1') \neq h(D_3')$, we have that $W_{p(D_1')}$ cannot contain all of $D_3'$. 
  \end{enumerate}
  By the search $h'$ conducts upon choosing its hypothesis, the case $D_1' \setminus D_3' \neq \emptyset$ cannot be realized. Thus, the proof is concluded.
\end{proof}

Immediately, we see that any total $\Sd$-learner may be assumed globally witness-based and any (possibly) partial $\Sd$-learner may be assumed witness-based on the languages it learns. In contrast to $\G$- and $\Psd$-learners, we show set-driven learners may not be assumed total in general. We provide a separating class using self-learning classes as presented in \citet{CK16}.

\begin{theorem}
  We have that $[\Txt\Sd\Ex] \setminus [\totalCp\Txt\Sd\Ex] \neq \emptyset$.
\end{theorem}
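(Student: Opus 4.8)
The plan is to present the separating class as a \emph{self-learning class} in the style of \citet{CK16} and then to diagonalise against an arbitrary total set-driven learner using the recursion theorem (\KRT). Identify each natural number with a pair $\la r, y\ra$ and let $M$ be the partial learner that, on a finite content set $D$, outputs a fixed default whenever $D = \emptyset$ or the elements of $D$ do not all share a first component, and otherwise, with $r$ the common first component and $D' = \{y : \la r, y\ra \in D\}$, simulates $\varphi_r$ as a set-driven learner on $D'$ and returns a uniformly computable index for the $r$-tagged set $\{\la r, z\ra : z \in W_{\varphi_r(D')}\}$. The crucial feature is that this simulation may diverge --- a freedom a total learner lacks. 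Put $\La \coloneqq \Txt\Sd\Ex(M)$; then $\La \in [\Txt\Sd\Ex]$ by construction, and for every tag $r$ the $r$-tagged copy of every language $\varphi_r$ $\Txt\Sd\Ex$-learns belongs to $\La$.

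For $\La \notin [\totalCp\Txt\Sd\Ex]$, suppose a total set-driven learner $\varphi_q$ learned $\La$. Using \KRT\ we obtain, uniformly in $q$, a tag $p$, a learner $\varphi_p$, and a computable stage construction that, querying $\varphi_q$ on $p$-tagged sets, builds an ascending chain of finite sets $P_0 \subsetneq P_1 \subsetneq \cdots$ from a fixed infinite pool of $p$-tagged elements, while reserving one further $p$-tagged element $z^\ast$ (in a separate ``switch'' coordinate) outside the pool: from $P_k$ it dovetails over all finite supersets $D' \supseteq P_k$ inside the pool, using that $\varphi_q$ is total to test whether $\varphi_q$ of the $p$-tagging of $D'$ differs from $\varphi_q$ of the $p$-tagging of $P_k$, and if so sets $P_{k+1} \coloneqq D'$. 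Let $L'\coloneqq\bigcup_k P_k$ and let $L$ be its $p$-tagging. If the chain jumps infinitely often, then concatenating enumerations of $P_0,\ P_1 \setminus P_0,\ P_2 \setminus P_1, \dots$ (in the $p$-tagged universe) yields a text of $L$ along which $\varphi_q$ changes its hypothesis between every $P_k$ and $P_{k+1}$, so $\varphi_q$ diverges on this text and fails on $L$. If the chain stalls at $P_k$ with value $e$, then $\varphi_q$ of the $p$-tagging of $D''$ equals $e$ for \emph{every} $D''$ with $P_k \subseteq D'' \subseteq U$ (where $U$ is the $p$-tagged universe), so $\varphi_q$ converges to $e$ on every text of every such $p$-tagged $D''$; since $L$ (the $p$-tagging of $P_k$) and $L \cup \{z^\ast\}$ are two such, distinct, languages, $W_e$ differs from one of them and $\varphi_q$ fails there. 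In all cases $\varphi_q$ fails on a language that, as we check next, lies in $\La$ --- contradicting the assumption.

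It remains to see that the offending languages really lie in $\La$, i.e.\ that $\varphi_p$ $\Txt\Sd\Ex$-learns $L'$ and $L' \cup \{z'\}$, where $z'$ is the untagging of $z^\ast$, chosen so that $z' \notin L'$: we simply let $\varphi_p$ be the essentially constant learner that, given content $D'$, returns an S-m-n index for ``the set enumerated by running the construction'', augmented by $z'$ exactly when $z' \in D'$ --- a legitimate set-driven learner for this two-element family --- and \KRT\ supplies the self-reference the construction needs in order to know the tag $p$. The main obstacle is precisely this interlocking: $L$ is defined through $\varphi_q$'s behaviour on subsets of $L$, yet must remain learnable by the fixed simple learner $\varphi_p$; and, because set-driven learning forbids choosing a convenient text, one must (i) realise the induced mind changes along an \emph{actual} text of $L$ --- the ``staircase'' text above --- and (ii) defeat $\varphi_q$ in the stalling case without being able to detect that case, which is what the reserved switch element $z^\ast$ achieves.
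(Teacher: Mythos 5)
Your overall architecture (a self-learning set-driven class obtained by simulating the learner whose index is coded in the tag, plus a \KRT-diagonalisation against an assumed total learner $\varphi_q$, splitting into an ``infinitely many mind changes'' case and a ``stall'' case) is the same kind of argument as the paper's \ORT-construction, and your jump case and the membership arguments ($\varphi_p$ learns $L'$ and $L' \cup \{z'\}$, hence their taggings lie in $\La$) are fine. The problem is the stalling case. Your chain construction dovetails only over finite supersets $D' \supseteq P_k$ \emph{inside the pool}, and $z^\ast$ is explicitly reserved \emph{outside} the pool. Hence a stall at $P_k$ with value $e$ only tells you that $\varphi_q(\text{tag}(D')) = e$ for all finite $D'$ with $P_k \subseteq D' \subseteq \text{pool}$; it tells you nothing about $\varphi_q(\text{tag}(P_k) \cup \{z^\ast\})$, because no queried set ever contains $z^\ast$. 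So the assertion that $\varphi_q$ of the tagging of \emph{every} $D''$ with $P_k \subseteq D'' \subseteq U$ equals $e$ is unjustified for exactly the one set you need, and the claimed contradiction evaporates: a total learner may perfectly well output $e$ (an index for $\text{tag}(P_k)$) on all pool subsets extending $P_k$ and make a mind change to a correct index the moment it sees $z^\ast$, thereby learning both $L$ and $L \cup \{z^\ast\}$. Conversely, if you repair this by letting the search range over supersets containing $z^\ast$, then either $z^\ast$ can be absorbed into the chain (so $L' \cup \{z'\} = L'$ and your two-language family collapses, and $\varphi_p$ no longer covers the witnesses you would then need, such as $P_k \cup \{w\}$ for a dynamically chosen fresh $w$), or you must refrain from adding it, in which case the detected mind change is on a set that never becomes $P_{k+1}$ and the staircase argument breaks. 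Either way the pre-reserved, static switch element is incompatible with a pool-restricted mind-change search.

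The paper sidesteps precisely this tension by making the ``switch'' element dynamic rather than reserved: the self-learning learner checks mind changes of $\varphi_{e'}$ only between \emph{consecutive} canonical finite segments $\{\langle e,e',i\rangle : i \leq k\}$ and $\{\langle e,e',i\rangle : i \leq k+1\}$, so when a stall first occurs it occurs exactly on the element separating the two finite languages $L_1$ and $L_2 = L_1 \cup \{\langle e,e',k+1\rangle\}$ used to defeat $\varphi_{e'}$, and the self-learning learner itself (rather than a fixed auxiliary $\varphi_p$ with a hard-coded $z'$) switches to an index for the extended finite set upon seeing that element, which is what guarantees both witnesses lie in the learnable class. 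To fix your proof you would need either this consecutive-segment stall test, or an auxiliary learner $\varphi_p$ that learns $P_k$ together with all its one-element extensions inside the search space --- your ``essentially constant'' $\varphi_p$ does not.
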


\begin{proof}
  We show the separation using the Operator Recursion Theorem (\ORT). In order to define the $\Txt\Sd\Ex$-learner $h$, we need the auxiliary predicate $Q$ defined as, for all $e, e', k \in \N$,  
  \begin{align*}
    Q(e, e', k) \Leftrightarrow \forall k' < k\colon \varphi_{e'}(\llangle e, e', k' \rrangle) \neq \varphi_{e'}(\llangle e, e', k' + 1 \rrangle).
  \end{align*}
  For finite $D$, let $k_D = \max(\pi_2(D))$ and
  \begin{align*}
    h(D) = \begin{cases} 
      \ind(\emptyset), &\falls D = \emptyset, \\
      e, &\sonstfalls \pi_0(D) = \{e\} \wedge \pi_1(D) = \{ e' \} \wedge Q(e, e', k_D), \\
      \ind(\llangle e, e', k_D \rrangle), &\sonstfalls \pi_0(D) = \{e\} \wedge \pi_1(D) = \{ e' \} \wedge \neg Q(e, e', k_D), \\
      \divs, &\sonst
    \end{cases}
  \end{align*}
  Intuitively, learner $h$ waits with its decision until it sees what a possible learner $\varphi_{e'}$ does. Then, it outputs its hypothesis accordingly. Let $\La = \Txt\Sd\Ex(h)$ and assume there exists some total learner $h'$ such that $\La \subseteq \totalCp\Txt\Sd\Ex(h')$. Let $e'$ be such that $\varphi_{e'} = h'$. Using \ORT, there exists $e$ such that 
  \begin{align*}
    W_e = \{ \langle e, e', i \rangle : i \in \N \wedge Q(e, e', i) \}.
  \end{align*}
  We show that we can find a language which $h$ learns, but $h'$ cannot. We distinguish the following cases. 
  \begin{enumerate}
    \itemin{1. Case:} $W_e$ is infinite. Then $h$ learns $W_e$ as it, given finite non-empty $D \subseteq W_e$, outputs $e$. That is the correct behaviour. On the other hand, $h'$ cannot learn $W_e$ as it makes infinitely many mind changes on the text $T\colon n \mapsto \langle e, e', n \rangle$.
    \itemin{2. Case:} $W_e$ is finite. Let $k = \max(\pi_2(W_e))$. We show that $h$ learns 
    \begin{align*}
      L_1 &\coloneqq W_e, \\ 
      L_2 &\coloneqq W_e \cup \{ \langle e, e', k + 1 \rangle \}.
    \end{align*}
    For non-empty $D \subseteq L_1$, $h$ outputs $e$. Once $h$ sees $\langle e, e', k + 1 \rangle$ it outputs $\ind(\llangle e, e', k+1 \rrangle)$. Thus, it learns $L_1$ and $L_2$ correctly. However, as $\neg Q(e, e', k +1)$ and $k$ is minimal such, we have 
    \[
      h'(L_1) = \varphi_{e'}(L_1) = \varphi_{e'}(L_2) = h'(L_2).
    \]
    So, $h'$ cannot distinguish between $L_1$ and $L_2$ and, thus, is not able to learn both languages simultaneously. \qedhere
  \end{enumerate} \noqed
\end{proof}

Altogether, we obtain the following normal form for explanatory set-driven witness-based learners. Any (total) set-driven learner may be assumed (globally) witness-based, respectively. However, total set-driven learners lack the learning power of their (possibly) partial counterpart.

\begin{corollary}
The following equalities hold, while the two classes separate.\label{Coro:NF-SdEx}
\begin{align*}
  [\Txt\Sd\Wb\Ex] = [\Txt\Sd\Ex] \text{ and } [\tau(\Wb)\Txt\Sd\Ex] = [\totalCp\Txt\Sd\Ex].
\end{align*}
\end{corollary}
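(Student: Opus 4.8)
The plan is to derive both equalities as instances of Theorem~\ref{Thm:AllSd}, choosing the parameters $\Ia$ and $\Sa$ appropriately, and to obtain the separation directly from the theorem immediately preceding this corollary. Recall that both $\partialCp$ and $\totalCp$ are {\Rmonoid}s, so both are admissible choices for $\Ia$, and that $\emptyset$ and $\Pow(\N)$ are both closed under subsets.

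First I would prove $[\Txt\Sd\Wb\Ex] = [\Txt\Sd\Ex]$ by applying Theorem~\ref{Thm:AllSd} with $\Ia = \partialCp$ and $\Sa = \emptyset$. Since $\Txt(\emptyset) = \emptyset$, the side conditions ``$\Wb$ also on $\Txt(\Sa)$'' in~\ref{SdWb} and ``defined also on $\Txt(\Sa)$'' in~\ref{SdT} are vacuous; hence~\ref{SdWb} says exactly $\La \in [\Txt\Sd\Wb\Ex]$ and~\ref{SdT} says exactly $\La \in [\Txt\Sd\Ex]$, and their equivalence is the first equality. For $[\tau(\Wb)\Txt\Sd\Ex] = [\totalCp\Txt\Sd\Ex]$ I would argue in two steps. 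First, a $\Sd$-learner that must satisfy $\Wb$ on \emph{every} text is forced to be total: for each text $T$ the sequence $\Sd(h,T)$ must be a total learning sequence, and the content of an initial segment $T[i]$ ranges over all finite sets, so $h$ is defined on every finite set; hence the admissible class may be taken to be $\totalCp$, i.e.\ $[\tau(\Wb)\Txt\Sd\Ex] = [\tau(\Wb)\totalCp\Txt\Sd\Ex]$. Second, I apply Theorem~\ref{Thm:AllSd} with $\Ia = \totalCp$ and $\Sa = \Pow(\N)$, so that $\Txt(\Sa) = \Txt$: then~\ref{SdWb} asserts that $\La$ is $\Txt\Sd\Wb\Ex$-learned by a total learner that is moreover $\Wb$ on all texts, which is precisely $\La \in [\tau(\Wb)\totalCp\Txt\Sd\Ex]$, while~\ref{SdT} asserts that $\La$ is $\Txt\Sd\Ex$-learned by a total learner (``defined also on $\Txt$'' being automatic for total learners), which is precisely $\La \in [\totalCp\Txt\Sd\Ex]$. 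Combining the two steps yields the second equality.

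Finally, the separation follows by combining the trivial inclusion $[\totalCp\Txt\Sd\Ex] \subseteq [\Txt\Sd\Ex]$ with the theorem just above stating $[\Txt\Sd\Ex] \setminus [\totalCp\Txt\Sd\Ex] \neq \emptyset$; together with the two equalities already established this gives $[\tau(\Wb)\Txt\Sd\Ex] = [\totalCp\Txt\Sd\Ex] \subsetneq [\Txt\Sd\Ex] = [\Txt\Sd\Wb\Ex]$, so the two classes indeed separate.

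The whole argument is essentially bookkeeping on top of Theorem~\ref{Thm:AllSd}; the one point deserving a little care is the observation that global witness-basedness forces a $\Sd$-learner to be total, so that $\partialCp$ may be replaced by $\totalCp$ in the second equality. If one prefers not to lean on the exact convention for restrictions evaluated on partial learning sequences, the same conclusion is reached by instead instantiating Theorem~\ref{Thm:AllSd} with $\Ia = \partialCp$ and $\Sa = \Pow(\N)$, which already forces the witnessing $\Txt\Sd\Ex$-learner to be defined on all texts and hence total.
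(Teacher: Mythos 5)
Your proposal is correct and matches the paper's intended derivation: the corollary is stated as an immediate consequence of Theorem~\ref{Thm:AllSd} (instantiated with $\Ia=\partialCp,\Sa=\emptyset$ for the first equality and with $\Sa=\Pow(\N)$, total learners, for the second) together with the preceding separation $[\Txt\Sd\Ex]\setminus[\totalCp\Txt\Sd\Ex]\neq\emptyset$. Your remark that global witness-basedness (or definedness on all texts) forces a set-driven learner to be total is exactly the bookkeeping point the paper leaves implicit.
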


\section{Semantic Witness-based Learning}\label{Sec:SemWb}

Transitioning the results of the previous section to behaviourally correct learners is not immediate since the connection between cautious, weakly monotone and semantically conservative, the semantic counterpart to conservative learning, learners is yet to be discovered in this setting. The work of \citet{DoskocK20} shows that (target-) cautious learners solely rely on the content of the information given, that is, they may be assumed set-driven in general. We show that the same holds true for semantically witness-based learners, the semantic counterpart to witness-based learners. Expanding the findings of \citet{KSS17}, who show that semantically conservative learners may be assumed semantically witness-based, we show the main result of this section, namely that globally semantically witness-based behaviourally correct $\Sd$-learners are as powerful as (possibly partial) Gold-style semantically conservative ones. 

We observe that the mentioned relaxation of constraints works in three ways. Firstly, we may vary the information we give to the learner without forfeiting learning power, that is, full-information learners are equally powerful as set-driven ones. Secondly, we may swap between the requirement of semantically witness-based and semantically conservative learning. Lastly, we may assume these restrictions to hold globally, that is, on arbitrary text. In this section, Theorems~\ref{Thm:tSemConv-SemConv} to~\ref{Thm:tau(Sem.)} provide the proof for the following theorem. 
\begin{theorem}
	We have that $[\tau(\SemWb)\Txt\Sd\Bc] = [\Txt\G\SemConv\Bc]$. \label{Thm:tauSemWbSd-GSemConv}
\end{theorem}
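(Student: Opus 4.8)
The equality is proved by establishing the two inclusions separately, and essentially all of the work lies in one direction. For $[\tau(\SemWb)\Txt\Sd\Bc] \subseteq [\Txt\G\SemConv\Bc]$ I would argue directly: if $h$ $\tau(\SemWb)\Txt\Sd\Bc$-learns $\La$, then the Gold-style learner $h'$ given by $h'(\sigma) \coloneqq h(\content(\sigma))$ produces, on every text $T$, exactly the same hypothesis sequence as $h$, i.e.\ $\G(h',T) = \Sd(h,T)$; hence $h'$ is $\Bc$ on every text of a language in $\La$. Moreover $\SemWb$ implies $\SemConv$ pointwise: if $n < m$, $\content(T[m]) \subseteq W_{p(n)}$ and $W_{p(n)} \neq W_{p(m)}$, then choosing $k = m$ in the $\SemWb$ clause forces a witness in $(\content(T[m]) \cap W_{p(m)}) \setminus W_{p(n)}$, contradicting $\content(T[m]) \subseteq W_{p(n)}$. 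So $h'$ is $\SemConv$ on every text, in particular on target texts, and thus $h'$ $\Txt\G\SemConv\Bc$-learns $\La$; requiring $\SemWb$ to hold globally only shrinks the left-hand class, so this inclusion comes essentially for free.

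For the reverse inclusion $[\Txt\G\SemConv\Bc] \subseteq [\tau(\SemWb)\Txt\Sd\Bc]$ I would start from an arbitrary $\Txt\G\SemConv\Bc$-learner $h$ of a class $\La$ — total without loss of generality, since $\SemConv$ and $\Bc$ are semantic and Theorem~\ref{Thm:DelTotal} applies — and push it through three normal-form steps, which are exactly the content of Theorems~\ref{Thm:tSemConv-SemConv} to~\ref{Thm:tau(Sem.)}. First, pass to a \emph{set-driven} semantically conservative $\Bc$-learner, in the spirit of the reduction for (target-) cautious learners by \citet{DoskocK20}: on content $D$, search for a smallest subset $D' \subseteq D$ on which $h$ has a $\Bc$-locking set compatible with $D$ and output (a padded copy of) $h$'s conjecture there, treating finite languages separately so that convergence is not missed after all data has arrived. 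Second, following \citet{KSS17}, upgrade \emph{semantic conservativeness} to \emph{semantic witness-basedness} by a ``delay until refutation'' strategy, so that the learner only semantically abandons its current guess once the incoming content exhibits an element witnessing that guess to be wrong. Third, make the $\SemWb$ behaviour \emph{global}, i.e.\ valid on every text and not just on texts of $\La$, by using hypotheses of the form ``enumerate the current content $D$ together with $h$'s conjecture on the minimal non-refuted subset of $D$, but \emph{poisoned}: stop enumerating as soon as a refutation appears.'' One then checks that, along any text, two comparable hypotheses differ semantically only when a genuinely new content element lies outside the earlier hypothesis — which is precisely $\SemWb$ — while on texts of $\La$ the poisoning is never triggered in the limit, so $\Bc$-correctness is retained.

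The main obstacle, I expect, is carrying out the third step while \emph{simultaneously} preserving the set-drivenness and $\Bc$-correctness of the first two; this is exactly what buys the claim that all three normal forms hold at once. Because the learner is set-driven it sees only $\content(T[n])$, so it cannot use order or timing to justify a mind change: every semantic change of its enumerated hypothesis must be accompanied, in the content itself, by a fresh element not enumerated by the abandoned hypothesis, and this must hold on arbitrary texts of arbitrary, possibly non-learnable, sets. The tension with $\Bc$-convergence on $\La$ is real, since to converge correctly the learner must sometimes ``catch up'' to a larger locking set, and each such change has to carry a witness. Making the poisoned-enumeration construction deliver witnesses uniformly for all index triples $n \le k \le m$ (not merely adjacent ones), handling the degenerate finite-language case cleanly, and verifying that $\SemWb$ is never violated off $\La$ is the delicate bookkeeping that Theorems~\ref{Thm:tSemConv-SemConv} to~\ref{Thm:tau(Sem.)} are there to discharge.
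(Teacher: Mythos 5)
Your left-to-right inclusion is correct and essentially the paper's: $h'(\sigma)=h(\content(\sigma))$ reproduces the set-driven hypothesis sequence, and $\SemWb$ implies $\SemConv$ pointwise exactly as you argue. The problem is the hard direction. You claim your three steps are ``exactly the content of Theorems~\ref{Thm:tSemConv-SemConv} to~\ref{Thm:tau(Sem.)}'', but they are a permutation of them with different constructions, and the permutation matters. The paper globalizes \emph{first}, at the Gold-style level (Theorem~\ref{Thm:tSemConv-SemConv}), precisely because a $\G$-learner sees all prior hypotheses and, thanks to consistency and the restriction to repetition-free sequences, can also search all relevant future ones; it then set-drivenizes (Theorem~\ref{thm:SemConv}) by a canonical-order construction whose $\SemConv$-verification applies \emph{global} semantic conservativeness of $h$ to canonical sequences of arbitrary finite sets; and the final $\SemConv$-to-$\SemWb$ swap is obtained almost for free from $\Cons\cap\SemWb=\Cons\cap\SemConv$ (Theorem~\ref{Thm:tau(Sem.)}), not by a new ``delay until refutation'' construction. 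Your order --- set-drivenize a merely on-target $\SemConv$ learner first, globalize a set-driven $\SemWb$ learner last --- is not covered by these theorems, so the two load-bearing steps remain unproven in your proposal.

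Concretely: (i) in your first step, the reduction of \citet{DoskocK20} is for (target-)cautious learners, and searching for a smallest ``$\Bc$-locking-compatible'' subset (itself not effectively checkable, only approximable in the limit) gives no reason why the resulting set-driven learner is still $\SemConv$ even on target texts; preserving $\SemConv$ is exactly what the consistency checks inside the enumeration of the paper's $h'$ in Theorem~\ref{thm:SemConv} are for, and their justification uses the globality you have postponed. (ii) Your final globalization imports the $\Ex$-case $\Wb$ machinery (refutation plus poisoning), which does not transfer to the semantic setting: a ``refutation'' would now be a \emph{semantic} mind change $W_{h(D')}\neq W_{h(D'')}$, which is not effectively detectable, while poisoning on \emph{syntactic} mind changes of a $\Bc$-learner fires forever even on target texts and destroys $\Bc$-correctness. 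Moreover, truncating an enumeration changes the extension of the abandoned hypothesis, so the poisoning itself induces semantic mind changes that must be witnessed by a content element outside the old hypothesis --- a witness the truncation does not supply on arbitrary texts. These are not bookkeeping details deferred to Theorems~\ref{Thm:tSemConv-SemConv}--\ref{Thm:tau(Sem.)}; they are the actual content of the theorem, and the paper's chosen order of normal forms is what makes them provable.
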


We give an overview on how we proceed to obtain this result. We show the equality from right to left. Firstly, we show that semantically conservative learners may be assumed so globally. Then, we show that Gold-style globally semantically conservative learners maintain the same learning power even when only basing their hypotheses on the content of the information given, that is, they may be assumed set-driven without loss of learning power. This significantly extends the results of \citet{KSS17}, where such learners are shown to be equally powerful when being partially set-driven or set-driven. Then, lastly, in order to ``jump'' from globally semantically conservative learning to its semantically witness-based counterpart, we generalize the result for the non-global case provided by \citet{KSS17}.

It follows the detailed process. We start with a $\Txt\G\SemConv\Bc$-learner $h$. Firstly, we show how to make $h$ \emph{globally} semantically conservative. The idea here is to monitor all possible prior and posterior hypotheses simultaneously. Especially here, Gold-style learners come in handy as they have all information about prior hypotheses at hand. It is important to the learner whether any previous hypothesis is consistent with the current information. If so, the learner, on this hypothesis, simply follows this prior hypothesis' lead. For this to work out, one has to closely monitor what the learner does on future hypotheses. Here, we take advantage of a peculiar property of consistent semantically conservative learners. Such learners need to include all seen data while not overgeneralizing the target language, meaning that finite languages are learnt as soon as all information of it is seen. Thus, it suffices to just consider information without repetition or pause-symbols. Hence, even a check for all possible future hypotheses, which usually is not achievable for Gold-style learners as the crucial correct mind change may come way after all data is seen, is possible in this scenario. The following theorem holds.

\begin{theorem}
  We have that $[\tau(\SemConv)\Txt\G\Bc] = [\Txt\G\SemConv\Bc]$. \label{Thm:tSemConv-SemConv}
\end{theorem}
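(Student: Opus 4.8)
The inclusion $[\tau(\SemConv)\Txt\G\Bc] \subseteq [\Txt\G\SemConv\Bc]$ is immediate, since requiring $\SemConv$ on every text is stronger than requiring it only on texts of languages to be learned. For the converse the plan is to start with an $\Txt\G\SemConv\Bc$-learner $h$ for a class $\La$ and turn it into a learner $h'$ that obeys $\SemConv$ on \emph{all} texts while still $\Bc$-learning $\La$. The Gold-style setting is convenient here: from a finite sequence $\sigma$ the new learner has access to the whole trace of conjectures $h$ produced on the prefixes of $\sigma$, so it can reason about all of $h$'s prior hypotheses, and (as explained below) also about the relevant future ones.

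I would build $h'$ by a ``freeze upon consistency'' strategy. On $\sigma$, $h'$ keeps track of the last hypothesis it committed to; it retains that hypothesis as long as it is consistent with $\content(\sigma)$, and adopts (a padded index for) $h$'s current guess only when forced to abandon the committed one. Moreover, whenever some \emph{earlier} conjecture of $h$ turns out to be consistent with $\content(\sigma)$, $h'$ follows that earlier conjecture's lead rather than $h$'s fresh guess, so that $h'$ never performs a semantic mind change that $\SemConv$ would retroactively forbid. Global semantic conservativeness of $h'$ then follows from monotonicity of inconsistency: a semantic mind change of $h'$ between steps $n$ and $m$ can only be caused by the step-$n$ hypothesis becoming inconsistent at some intermediate step $k \leq m$, i.e. $\content(T[k]) \not\subseteq W_{h'(T[n])}$; since $\content(T[k]) \subseteq \content(T[m])$ this contradicts $\content(T[m]) \subseteq W_{h'(T[n])}$, so whenever $\content(T[m]) \subseteq W_{h'(T[n])}$ we get $W_{h'(T[n])} = W_{h'(T[m])}$, and this argument uses nothing about $T$ being a text of a learnable language.

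For correctness on $\Txt(\La)$ I would first record a structural fact: on a text $T$ of some $L \in \La$ the learner $h$, being semantically conservative there, never conjectures a proper superset of $L$, because if $W_{h(T[n])} \supsetneq L$ then $\content(T[m]) \subseteq L \subsetneq W_{h(T[n])}$ for every $m > n$, so $\SemConv$ would freeze $h$ at that proper superset forever, contradicting $\Bc$-convergence to $L$. Hence any committed hypothesis of $h'$ that currently contains $\content(\sigma)$ is either exactly $L$ or will eventually be refuted by a new datum, and a case distinction on whether $L$ is finite or infinite yields $\Bc$-convergence of $h'$ to $L$. The crucial enabling observation, and what makes the transformation \emph{effective}, is the one highlighted before the theorem: a consistent semantically conservative $\Bc$-learner identifies every finite language it learns as soon as the full content of that language has appeared (consistency forces the content into the hypothesis, and $\SemConv$ then pins it down as correct). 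Consequently the only ``dangerous'' overgeneralising hypotheses $h'$ has to test for consistency are finite candidate targets, and since membership of a finite set in a computably enumerable set is semidecidable, the required consistency tests on prior and posterior conjectures of $h$ can actually be carried out.

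I expect this effectivity point to be the main obstacle. In general the predicate ``$\content(T) \subseteq W_e$'' cannot be refuted in finite time, so a naive ``delay until refutation'' freeze is not computable; the construction only goes through because the finite-language property confines the problematic instances to finite would-be targets, where checking consistency and even anticipating the relevant future mind changes of $h$ becomes possible. A secondary difficulty is ensuring at the outset that the learner one mimics is consistent without destroying the non-overgeneralisation property above: simply unioning $\content(\sigma)$ into every hypothesis can manufacture spurious proper supersets of $L$, so consistency should instead be built into the branch of $h'$ that returns a canonical index for $\content(\sigma)$ when no committed hypothesis survives.
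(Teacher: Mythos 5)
Your easy inclusion and your structural observations (a consistent $\SemConv$-learner never conjectures a proper superset of the target, and it identifies a finite language as soon as all of its content has appeared) are fine, and your instinct to follow earlier consistent conjectures is in the spirit of the paper's construction. But the core of your plan --- a ``freeze upon consistency'' rule applied at the level of the learner's \emph{outputs} --- has a genuine gap that your closing remarks do not repair. First, the rule is not computable: to retain the committed hypothesis ``as long as it is consistent with $\content(\sigma)$'' the learner would have to refute $\content(\sigma) \subseteq W_e$ in finite time, and semidecidability (which you invoke) only lets you \emph{confirm} such inclusions, never reject them. Your claim that the problematic instances are confined to finite candidate targets is a misdiagnosis: the restriction $\tau(\SemConv)$ must hold on \emph{arbitrary} texts, and the hypotheses involved are arbitrary c.e.\ (typically infinite) sets, so the undecidable consistency questions do not go away. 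Second, even granting an oracle for these tests, your conservativeness argument only shows that $h'$ does not \emph{abandon} a still-consistent committed hypothesis; $\SemConv$ demands the two-sided equality $W_{h'(T[n])} = W_{h'(T[m])}$ whenever $\content(T[m]) \subseteq W_{h'(T[n])}$, and the dangerous case is the one the paper singles out: an \emph{earlier} conjecture may become consistent only after the current conjecture has already enumerated data incomparable with it, so ``following the earlier conjecture's lead'' from then on cannot restore the inclusion of what was already enumerated.

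The paper's proof resolves both points by moving the adaptivity inside the hypotheses rather than into the choice of index. After assuming $h$ consistent (citing \citet{KSS17}) it restricts attention to sequences without repetitions or pause symbols (legitimate because a consistent semantically conservative learner cannot change semantically on repeated data), and defines each conjecture $W_{\hat h(\sigma)}$ by a dovetailed enumeration with two clauses: a backward clause that, once some prior $\hat h$-conjecture is \emph{seen} to contain $\content(\sigma)$, simply enumerates along those prior conjectures; and, crucially, a forward guard that before this happens only enumerates elements of $W_{h(\sigma)}$ which \emph{all} visible future hypotheses $h(\sigma\tau)$, for the finitely many repetition-free $\tau$ built from already-enumerated elements, are seen to contain. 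The repetition-free reduction is what makes this forward search over future inputs finite and effective, and the forward guard is exactly what guarantees that when a prior conjecture later turns out consistent, nothing outside it has been enumerated --- the missing ingredient in your construction. So your proposal identifies the right obstacle but does not supply the mechanism that overcomes it.
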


\begin{proof}
  The inclusion $[\tau(\SemConv)\Txt\G\Bc] \subseteq [\Txt\G\SemConv\Bc]$ follows immediately. For the other, let $h$ be a learner and let $\La = \Txt\G\SemConv\Bc(h)$. Without loss of generality, we may assume that $h$ is consistent, see \citet{KSS17}. We provide a learner $h'$ which $\tau(\SemConv)\Txt\G\Bc$-learns $\La$.

  We do so with the help of an auxiliary $\tau(\SemConv)\Txt\G\Bc$-learner $\hat{h}$, which only operates on sequences without repetitions or pause symbols. For convenience, we omit explicitly mentioning pause symbols. When $h'$ is given a sequence with repetitions, say $(7,1,5,1,4,\#,3,1)$, it mimics $\hat{h}$ given the same sequence without duplicates, that is, $h'(7,1,5,1,4,\#,3,1) = \hat{h}(7,1,5,4,3)$. First, note that this mapping of sequences preserves the $\subseteq$-relation on sequences, thus making $h'$ also a $\tau(\SemConv)$-learner. Furthermore, it suffices to focus on sequences without duplicates. This is the case since consistent, semantically conservative learners cannot change their mind when presented a datum they have already witnessed (or a pause symbol). Thus, $\hat{h}$ will be presented sufficient information for the learning task, which then again is transferred to $h'$. With this in mind, we only consider \textbf{sequences without repetitions (or pause symbols)} for the entirety of this proof. Sequences where duplicates may potentially still occur (for example when looking at the initial sequence of a text) are also replaced as described above. To ease notation, given a set $A$, we write $\Sq(A)$ for the subset of $A_\#^*$ where the sequences do not contain repetitions.
  Now, we define the auxiliary learner $\hat{h}$.

  \begin{algorithm2e}[h]
    \caption{The auxiliary $\tau(\SemConv)$-learner $\hat{h}$ used in the proof of Theorem~\ref{Thm:tSemConv-SemConv}} \label{Algo:SemWbBc}
    \param{$\Txt\G\SemConv$-learner $h$.}
    \input{Finite sequence $\sigma \in \Sq(\N)$.}
    \output{$W_{\hat{h}(\sigma)} = \bigcup_{t \in \N} E_t$.}
    \init{$t' \leftarrow 0$, $E_0 \leftarrow \content(\sigma)$ and, for all $t>0$, $E_t \leftarrow \emptyset$.}
    \For{$t=0$ \KwTo $\infty$}{
      \uIf{$\exists \sigma' \subsetneq \sigma \colon \content(\sigma) \subseteq W_{\hat{h}(\sigma')}^t$\label{AlgoLine:SemWb:BackwardBeg}}{
        $\Sigma' \leftarrow \{ \sigma' \subsetneq \sigma \mid \content(\sigma) \subseteq W_{\hat{h}(\sigma')}^t\}$ \\
        $E_{t+1} \leftarrow E_t \cup \bigcup_{\sigma' \in \Sigma'} W_{\hat{h}(\sigma')}^t$\label{AlgoLine:SemWb:BackwardEnd}
      }\uElseIf{$\forall \sigma' \subsetneq \sigma \colon \content(\sigma) \not\subseteq W_{h(\sigma')}^t$\label{AlgoLine:SemWb:ForwardBeg}}{
        \If{$\forall \tau \in \Sq\left(W_{h(\sigma)}^{t'} \setminus \content(\sigma)\right) \colon \bigcup_{\tau' \in \Sq\left(W_{h(\sigma)}^{t'} \setminus \content(\sigma)\right)} W_{h(\sigma \tau')}^{t'} \subseteq W_{h(\sigma\tau)}^t$\label{AlgoLine:SemWb:ForwardMid}}{
          $E_{t+1} \leftarrow E_t \cup W_{h(\sigma)}^{t'}$ \\
          $t' \leftarrow t' + 1$ 
        }
      \label{AlgoLine:SemWb:ForwardEnd}}\Else{$E_{t+1} \leftarrow E_t$}
    }
  \end{algorithm2e}
  Consider the learner $\hat{h}$ as in Algorithm~\ref{Algo:SemWbBc} with parameter $h$. Given some input $\sigma$, the intuition is the following. Once $\hat{h}$, on any previous sequence $\sigma'$, is consistent with the currently given information $\content(\sigma)$, the learner only enumerates the same as such hypotheses (see lines~\ref{AlgoLine:SemWb:BackwardBeg} to~\ref{AlgoLine:SemWb:BackwardEnd}). While no such hypothesis is found, $\hat{h}$ does a forward search (see lines~\ref{AlgoLine:SemWb:ForwardBeg} to~\ref{AlgoLine:SemWb:ForwardEnd}) and only enumerates elements if all visible future hypotheses also witness these elements. As already discussed, $\hat{h}$ operates only on sequences without repetitions, thus making it possible to check \emph{all} future hypotheses.

  First we show that for any $L \in \La$ and any $T \in \Txt(L)$ we have, for $n \in \N$,
  \begin{align}
    W_{\hat{h}(T[n])} \subseteq W_{h(T[n])}. \label{eq:tSemConvSubs}
  \end{align}
  Note that, while the (infinite) text $T$ may contain duplicates, the (finite) sequence $T[n]$ does not by our assumption. Now, we show Equation~\eqref{eq:tSemConvSubs} by induction on $n$. The case $n=0$ follows immediately. Assume Equation~\eqref{eq:tSemConvSubs} holds up to~$n$. As $\content(T[n+1]) \subseteq W_{h(T[n+1])}$ by consistency of $h$ and as, for $n' \leq n$, $W_{h(T[n'])} = W_{h(T[n+1])}$ whenever $\content(T[n+1]) \subseteq W_{h(T[n'])}$, we get
  \[
     W_{\hat{h}(T[n+1])} \subseteq \bigcup_{\substack{n' \leq n, \\ \content(T[n+1]) \subseteq W_{\hat{h}(T[n'])}}} W_{\hat{h}(T[n'])} \cup W_{h(T[n+1])} \subseteq W_{h(T[n+1])}.
  \]
  The first inclusion follows as the big union contains all previous hypotheses found in the first if-clause (lines~\ref{AlgoLine:SemWb:BackwardBeg} to~\ref{AlgoLine:SemWb:BackwardEnd}) and as $W_{h(T[n+1])}$ contains all elements possibly enumerated by the second if-clause (lines~\ref{AlgoLine:SemWb:ForwardBeg} to~\ref{AlgoLine:SemWb:ForwardEnd}). Note that the latter also contains $\content(T[n+1])$, thus covering the initialization. The second inclusion follows by the induction hypothesis and semantic conservativeness of~$h$. 

  We continue by showing that $\hat{h}$ $\Txt\G\Bc$-learns $\La$. To that end, let $L \in \La$ and $T \in \Txt(L)$. We distinguish the following two cases. 
  \begin{enumerate}
    \itemin{1. Case:} $L$ is finite. Then there exists $n_0$ such that $\content(T[n_0]) = L$. Let $n \geq n_0$. By $\SemConv$ and consistency of $h$, we have $W_{h(T[n])} = L$. Thus, we have $W_{\hat{h}(T[n])} = L$ as 
    \[
      L = \content(T[n]) \subseteq W_{\hat{h}(T[n])} \overset{\eqref{eq:tSemConvSubs}}{\subseteq} W_{h(T[n])} = L.
    \]
    \itemin{2. Case:} $L$ is infinite. Let $n_0$ be minimal such that $W_{h(T[n_0])} = L$. Then, due to $h$ being semantic conservative, $T[n_0]$ is a $\Bc$-locking sequence for $h$ on $L$ and we have 
    \[
      \forall i < n_0 \colon \content(T[n_0]) \not\subseteq W_{h(T[i])}.
    \]
    Thus, elements enumerated by $W_{\hat{h}(T[n_0])}$ cannot be enumerated by the first if-clause (lines~\ref{AlgoLine:SemWb:BackwardBeg} to~\ref{AlgoLine:SemWb:BackwardEnd}) but only by the second one (lines~\ref{AlgoLine:SemWb:ForwardBeg} to~\ref{AlgoLine:SemWb:ForwardEnd}). We show $W_{\hat{h}(T[n_0])} = L$. The $\subseteq$-direction follows immediately from Equation~\eqref{eq:tSemConvSubs}. For the other direction, let $t'$ be the current step of enumeration. As $T[n_0]$ is a $\Bc$-locking sequence, we have, for all $\tau \in \Sq(W_{h(T[n_0])}^{t'})$, 
    $$\bigcup_{\tau' \in \Sq\left(W_{h(T[n_0])}^{t'} \setminus \content(T[n_0]) \right)} W_{h(T[n_0])^\frown \tau'}^{t'} \subseteq W_{h(T[n_0] ^\frown \tau)} = L.$$
    Thus, at some step $t$, $E_{t+1} \leftarrow W_{h(T[n_0])}^{t'}$ and, then, the enumeration continues with $t' \leftarrow t' +1$. In the end we have $L \subseteq W_{\hat{h}(T[n_0])}$ and, altogether, $L = W_{\hat{h}(T[n_0])}$.

    We now show that, for any $n > n_0$, $L = W_{\hat{h}(T[n])}$ holds. At some point $\content(T[n]) \subseteq W_{\hat{h}(T[n_0])}$ will be witnessed. Thus, $W_{\hat{h}(T[n])}$ will enumerate the same as $W_{\hat{h}(T[n_0])} = L$, and it follows that $L \subseteq W_{\hat{h}(T[n])}$. By Equation~\eqref{eq:tSemConvSubs}, $W_{\hat{h}(T[n])}$ will not enumerate more than $W_{h(T[n])} = L$, that is, $W_{\hat{h}(T[n])} \subseteq W_{h(T[n])} = L$, concluding this part of the proof.
  \end{enumerate}

  It remains to be shown that $\hat{h}$ is $\SemConv$ on arbitrary text $T \in \Txt$. The problem is that when a previous hypothesis becomes consistent with information currently given, the learner may have already enumerated incomparable data in its current hypothesis. This is prevented by closely monitoring the time of enumeration, namely by waiting until the enumerated data will certainly not cause such problems. 
  We prove that $\hat{h}$ is $\tau(\SemConv)$ formally. Let $n < n'$ be such that $\content(T[n']) \subseteq W_{\hat{h}(T[n])}$. We show that $W_{\hat{h}(T[n])} = W_{\hat{h}(T[n'])}$ by case distinction.
  \begin{enumerate}
    \item[$\subseteq$:] The inclusion $W_{\hat{h}(T[n])} \subseteq W_{\hat{h}(T[n'])}$ follows immediately, as by assumption $\content(T[n']) \subseteq W_{\hat{h}(T[n])}$, meaning that at some point the first if-clause (see lines~\ref{AlgoLine:SemWb:BackwardBeg} and~\ref{AlgoLine:SemWb:BackwardEnd}) will find $T[n]$ as a candidate and then $W_{\hat{h}(T[n'])}$ will enumerate $W_{\hat{h}(T[n])}$.
    \item[$\supseteq$:] Assume there exists $x \in W_{\hat{h}(T[n'])} \setminus W_{\hat{h}(T[n])}$. Let $x$ be the first such enumerated and let $t_x$ be the step of enumeration with respect to $h(T[n'])$, that is, $x \in W_{h(T[n'])}^{t_x}$ but $x \notin W_{h(T[n'])}^{t_x - 1}$. Furthermore, let $t_\content$ be the step where $\content(T[n']) \subseteq W_{\hat{h}(T[n])}$ is witnessed for the first time. Now, by the definition of $\hat{h}$, we have
    \[
      W_{\hat{h}(T[n'])} \subseteq W_{h(T[n'])}^{t_\content-1} \cup W_{\hat{h}(T[n])},
    \]
    as $W_{\hat{h}(T[n'])}$ enumerates at most $W_{h(T[n'])}^{t_\content-1}$ until it sees the consistent prior hypothesis, namely $\hat{h}(T[n])$. This happens exactly at step $t_\content -1$, at which $W_{\hat{h}(T[n'])}$ stops enumerating elements from $W_{h(T[n'])}^{t_\content-1}$ and continues to follow $W_{\hat{h}(T[n])}$. Now, observe that $t_x < t_\content$ as $x \in W_{\hat{h}(T[n'])}$ but $x \notin W_{\hat{h}(T[n])}$. But then 
    \[
      x \in \bigcup_{\tau' \in \Sq\left( W_{h(T[n])}^{t_\content} \setminus \content(T[n]) \right)} W_{h(T[n] ^\frown \tau')}^{t_\content} \subseteq W_{\hat{h}(T[n])},
    \]
    which must be witnessed in order for $W_{\hat{h}(T[n])}$ to enumerate $\content(T[n'])$ via the second if-clause (lines~\ref{AlgoLine:SemWb:ForwardBeg} to~\ref{AlgoLine:SemWb:ForwardEnd}), that is, to get $\content(T[n']) \subseteq W_{\hat{h}(T[n])}$. This contradicts $x \notin W_{\hat{h}(T[n])}$, concluding the proof. \qedhere 
  \end{enumerate} \noqed
\end{proof}

Applying this result, we may assume $h$ to be globally semantically conservative. Next, we show that even restricting the learners memory does not affect its learning power. In particular, we show that $h$ may be assumed set-driven. This significantly extends the result shown by \citet{KSS17} where (not necessarily globally) semantically conservative partially set-driven learners may be assumed set-driven. While the latter result relies on such learners requirement to include all seen data while not being allowed to overgeneralize, our result originates from another fact. Being semantically conservative and therefore consistent \citep{KSS17} at the same time means that, given any information, whenever the learner $h$ overgeneralizes it may not change its mind on information from this overgeneralization. Thus, we may pretend that the information given came in a certain, for the sake of simplicity say ascending, order. Then, if the learner suggests an element out of this order in its hypothesis, one simply checks whether learner guesses the current information extended by this element on the shortest string of information not containing this element but maintaining the order. If so, both hypotheses are the same and thus both may include this element. Otherwise, the learner skips that element. We provide the rigorous proof. 

\begin{theorem}
  We have that \label{thm:SemConv}
  $$[\tau(\SemConv)\Txt\Sd\Bc] = [\tau(\SemConv)\Txt\Psd\Bc] = [\tau(\SemConv)\Txt\G\Bc].$$
\end{theorem}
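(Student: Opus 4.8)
The two inclusions from left to right are immediate: a $\Sd$-learner is a $\Psd$-learner that ignores the length component, a $\Psd$-learner is a $\G$-learner, and in both reductions the sequence of emitted hypotheses -- hence the predicate $\SemConv$ -- is preserved verbatim, so global semantic conservativeness and $\Bc$-convergence carry over. The work therefore lies in proving $[\tau(\SemConv)\Txt\G\Bc] \subseteq [\tau(\SemConv)\Txt\Sd\Bc]$. First I would fix a globally semantically conservative Gold-style $\Bc$-learner $h$ for a class $\La$ and, using the normal form of \citet{KSS17}, additionally assume $h$ consistent. For finite $D \subseteq \N$ write $\sigma_D$ for the strictly ascending sequence listing $D$ without pause symbols, and let $D[k]$ denote the set of the $k$ least elements of $D$.

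The plan is to let the set-driven learner $h'$, on input $D$, simulate $h$ on $\sigma_D$. For \emph{correctness} this is essentially enough: if $L \in \La$, then $h$ learns $L$ on its canonical text, and since $h$ is consistent and semantically conservative it has a $\Bc$-locking sequence on $L$ of the form $\sigma_{L[k]}$; for every $D$ with $L[k] \subseteq D \subseteq L$ all elements of $D$ outside $L[k]$ exceed $\max(L[k])$, so $\sigma_{L[k]}$ is a prefix of $\sigma_D$ and hence $W_{h(\sigma_D)} = L$. The obstruction is \emph{semantic conservativeness}: $D \mapsto h(\sigma_D)$ need not be semantically conservative, because when $D \subseteq D'$ an element of $D' \setminus D$ lying below $\max D$ is inserted into $\sigma_{D'}$ ``out of order'', so $\sigma_D$ is not a prefix of $\sigma_{D'}$ and the semantic conservativeness of $h$ along texts cannot be invoked to compare $W_{h(\sigma_D)}$ with $W_{h(\sigma_{D'})}$.

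To repair this, $h'(D)$ does not reproduce $h(\sigma_D)$ but (via the $s$-$m$-$n$ theorem) outputs an index for a set that enumerates $D$ together with only those $x \in W_{h(\sigma_D)}$ that pass a consistency check, admitting each such $x$ only once the check is confirmed and halting enumeration once a semantic mind change / inconsistency of $h$ is witnessed, in the style of the ``delay until consistency / poisoning'' constructions used earlier in this paper. An element $x > \max D$ needs no check: there $\sigma_{D \cup \{x\}} = \sigma_D \concat x$, and by global semantic conservativeness of $h$ together with $D \cup \{x\} \subseteq W_{h(\sigma_D)}$ one already gets $W_{h(\sigma_D \concat x)} = W_{h(\sigma_D)}$. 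An out-of-order $x$ (i.e.\ $x < \max D$, $x \notin D$) is admitted only once it is observed that inserting $x$ into its proper ascending position leaves $h$'s verdict unchanged, formalised as an equality between $h$'s guess on $\sigma_{D \cup \{x\}}$ and its guess on a suitable \emph{shorter} ascending sequence that contains $x$ -- chosen to be a prefix of $\sigma_{D\cup\{x\}}$ so that the equality can be certified via the semantic conservativeness of $h$. This is precisely the condition that will force the later set-driven step on $D \cup \{x\}$ to reproduce the present hypothesis.

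It then remains to verify the three required properties. The learner $h'$ is total by construction. Correctness transfers from the second paragraph: for $D \subseteq L$ one has $x \in W_{h(\sigma_D)} = L$ for every admitted candidate, so all auxiliary ascending sequences in the checks are themselves sequences over $L$ extending $\sigma_{L[k]}$, whence $h$'s verdict on each equals $L$; thus once $L[k] \subseteq D \subseteq L$ every element of $L$ is admitted, the poisoning never triggers, and $W_{h'(D)} = L$. The genuine work -- and the step I expect to be the main obstacle -- is showing that $h'$ is \emph{globally} semantically conservative: given $D \subseteq D'$ with $D' \subseteq W_{h'(D)}$, one must show $W_{h'(D)} = W_{h'(D')}$, which, reducing to single-element increments $D' = D \cup \{x\}$, amounts to proving that the admission criterion is \emph{confluent}, i.e.\ that relativising it to $D'$ picks out the same enumerated set as relativising it to $D$. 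Here the hypothesis that $h$ is semantically conservative on \emph{all} texts (not merely on the target) is used essentially: each admitted element of $W_{h'(D)}$ carries a witnessed equality among the relevant $h$-guesses, and these equalities are chained along the increment; the delicate point is the bookkeeping of enumeration stages -- ensuring that admitting one out-of-order element never retroactively invalidates another element's certificate and that the poisoning clause behaves coherently under passing from $D$ to $D'$.
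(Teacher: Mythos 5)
Your skeleton matches the paper's: simulate $h$ on the canonical (ascending) sequence $\sigma_D$, admit every candidate above $\max(D)$ outright, admit an out-of-order candidate only after a positive consistency certificate, and use the $\Bc$-locking sequence on the canonical text for convergence; that part of your sketch is sound. However, one ingredient of your construction would fail as stated: in the $\Bc$ setting neither a \emph{semantic} mind change of $h$ (an inequality of two c.e.\ sets) nor an inconsistency $\content(\sigma)\not\subseteq W_{h(\sigma)}$ is a computably witnessable event, so ``halting enumeration once a semantic mind change / inconsistency of $h$ is witnessed'' cannot be part of an effective enumeration procedure; the poisoning trick from the $\Ex$-theorems does not transfer. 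It is also unnecessary: the paper's learner is a purely positive enumeration, $W_{h'(D)} = D \cup (W_{h(\sigma_D)})_{>\max(D)} \cup \{x \in (W_{h(\sigma_D)})_{<\max(D)} : D\cup\{x\}\subseteq W_{h(\sigma_{(D_{<x})})}\}$, whose only test is the c.e.\ condition $D\cup\{x\}\subseteq W_{h(\sigma_{(D_{<x})})}$, i.e.\ consistency of $h$'s guess on the canonical sequence of the part of $D$ \emph{below} $x$ (a common prefix of $\sigma_D$ and $\sigma_{D\cup\{x\}}$ not containing $x$), whereas your comparison sequence is left as ``a suitable shorter ascending prefix containing $x$'' and never pinned down.

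The more serious issue is that the decisive step is announced rather than proved: you yourself flag that showing $h'$ is $\tau(\SemConv)$ ``amounts to proving that the admission criterion is confluent'' and ``expect it to be the main obstacle'', but you give no argument beyond the intention to chain certificates. In the paper this is the bulk of the proof: for $D'\subseteq D''$ with $D''\subseteq W_{h'(D')}$ one first derives, from the successful checks for the elements of $D''\setminus D'$ below $\max(D')$ together with global $\SemConv$ of $h$, the key equality $W_{h(\sigma_{D'})} = W_{h(\sigma_{D''})}$ (via the common prefix $\sigma_{(D'_{<m})} = \sigma_{(D''_{<m})}$ for $m$ the least new element below $\max(D')$), and then verifies $W_{h'(D')} = W_{h'(D'')}$ element by element, splitting on the position of a candidate $x$ relative to $m$, $\max(D')$ and $\max(D'')$ and chaining equalities of the form $W_{h(\sigma'_{<x})} = W_{h(\sigma''_{<x})}$ obtained from $\tau(\SemConv)$. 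Since your admission test is unspecified, this confluence cannot even be checked for your construction, and nothing in the proposal substitutes for that case analysis; the single-increment reduction you mention is fine but does not remove the need for it. So the proposal has a genuine gap exactly at the property the theorem is about.
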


\begin{proof}
  We show all three equalities at once. Let $h$ be a learner and $\La = \tau(\SemConv)\Txt\G\Bc(h)$. Without loss of generality, we may assume $h$ to be globally consistent, as shown in \citet{KSS17}. We provide a learner $h'$ such that $\La \subseteq \tau(\SemConv)\Txt\Sd\Bc(h')$. To that end, we introduce the following auxiliary notation which we use throughout this proof. For each $x \in \N$ and each finite set $D \subseteq \N$, let
  \begin{align*}
    d &\coloneqq \max(D), \\
    \sigma_D &\coloneqq \text{canonical sequence of $D$}, \\
    D_{<x} &\coloneqq \{ y \in D \mid y < x \}.
  \end{align*}
  The latter definition can be extended to $\leq, >$ and $\geq$, as well as infinite sets in a natural way. Now, let $h'$ be such that
  \begin{align*}
    W_{h'(D)} &= D \cup \left(W_{h(\sigma_D)}\right)_{>d} \cup \left\{x \in \left(W_{h(\sigma_D)}\right)_{<d} : D \cup \{x\} \subseteq W_{h(\sigma_{(D_{<x})})}\right\}.
  \end{align*}
  Intuitively, $h'(D)$ simulates $h$ assuming it got the information in the canonical order, that is, $h'(D)$ simulates $h(\sigma_D)$. All elements $x \in W_{h(\sigma_D)}$ such that $x > d$ can be enumerated, as any later, consistent hypothesis will do so as well. If $x < d$, then we check whether the learner $h$ given the canonical sequence up to $x$ is consistent with $D \cup \{ x\}$, that is, whether $D \cup \{ x \} \subseteq W_{h(\sigma_{(D_{<x})})}$. If so, we enumerate $x$ as it will be done by the previous hypotheses as well. Note that, for each finite $D \subseteq \N$, we have
  \begin{align}
    W_{h'(D)} \subseteq W_{h(\sigma_D)}. \label{SubsetProp}
  \end{align}

  We proceed by proving that $h'$ $\tau(\SemConv)\Txt\Sd\Bc$-learns $\La$. First, we show the $\Txt\Sd\Bc$-convergence. The idea here is to find a $\Bc$-locking sequence of the canonical text. Doing so ensures that even if elements are shown out of order they will be enumerated as $h$ will not make a mind change and thus the consistency condition will be observed. To that end, let $L \in \La$. We distinguish whether $L$ is finite or not.
  \begin{enumerate}
    \itemin{1. Case:} $L$ is finite. We show that $W_{h'(L)} = L$. By definition of $h'$, we have $L \subseteq W_{h'(L)}$. For the other inclusion, note that as $h$ is consistent and semantically conservative (which in particular implies it being target-cautious), we have that $W_{h(\sigma_L)} = L$. Then, by Equation~\eqref{SubsetProp}, we have that $W_{h'(L)} \subseteq W_{h(\sigma_L)} = L$, concluding this case.
    \itemin{2. Case:} $L$ is infinite. Let $T_c$ be the canonical text of $L$, and let $\sigma_0$ be a $\Bc$-locking sequence for $h$ on $T_c$. Such a $\Bc$-locking sequence exists, as $h$ is strongly $\Bc$-locking, see \citet[Thm.~7]{KSS17}. Let $D_0 \coloneqq \content(\sigma_0)$. For any input $D \subseteq L$ such that $D \supseteq D_0$, we show that $W_{h'(D)} = L$. By Equation~\eqref{SubsetProp}, we get $W_{h'(D)} \subseteq W_{h(\sigma_D)} = L$. To show $L \subseteq W_{h'(D)}$, let $x \in L$. We distinguish the relative position of $x$ and $d$.
    \begin{enumerate}
      \item[$x > d$:] In this case we have $x \in W_{h'(D)}$ by definition of $h'$.
      \item[$x \leq d$:] In this case either $x \in D$ and we immediately get $x \in W_{h'(D)}$, or we have to check whether $D \cup \{x\} \subseteq W_{h(\sigma_{(D_{<x})})}$. Since $\sigma_0$ is an initial segment of the canonical text of $L$, it holds that $x > \max(\content(\sigma_0))$ and, thus, we get $\sigma_0 \subseteq \sigma_{(D_{<x})}$. Now $W_{h(\sigma_{(D_{<x})})} = L$, meaning that $D \cup \{x\} \subseteq W_{h(\sigma_{(D_{<x})})}$ will be observed at some point in the computation. Thus, $x \in W_{h'(D)}$ in this case as well.
    \end{enumerate}
  \end{enumerate}
  Altogether, we get $W_{h'(D)} = L$ and thus $\Txt\Sd\Bc$-convergence. It remains to be shown that $h'$ is $\tau(\SemConv)$. Let $D' \subseteq D''$ and $D'' \subseteq W_{h'(D')}$. The trick here is that upon checking for consistency with elements shown out of order, the learner has to check the same, minimal sequence regardless whether the input is $D'$ or $D''$. We proceed with the formal proof. Therefore, we expand the initially introduced notation of this proof. For any $x \in \N$ define
  \begin{align*}
    \sigma' &\coloneqq \sigma_{D'}, \\
    d' &\coloneqq \max(D'), \\
    \sigma'_{<x} &\coloneqq \sigma_{(D'_{<x})}.
  \end{align*}
  Analogously, we use $\sigma'', d''$ and $\sigma''_{<x}$ when $D''$ is the underlying set. First, we show that $W_{h(\sigma')} = W_{h(\sigma'')}$. Since $W_{h'(D')}$ enumerates $D''$, that is, $D'' \subseteq W_{h'(D')}$, we have for all $y \in (D'' \setminus D')_{<d'}$ that $D' \cup \{ y\} \subseteq W_{h(\sigma'_{<y})}$ by definition of $h'$. Thus, we have
  \begin{align}
    W_{h(\sigma'_{<y})} = W_{h(\sigma')}. \label{EqualSubseq}
  \end{align}
  Note that, if $(D'' \setminus D')_{<d'}$ is empty, then $\sigma'_{<d'+1} = \sigma'$. Thus, Equation~\eqref{EqualSubseq} also holds true for
  $$m \coloneqq \begin{cases} \min(D''_{<d'} \setminus D'), &\falls D''_{<d'} \setminus D' \neq \emptyset, \\ d' +1, &\sonst \end{cases}$$ 
  Furthermore, it holds true that for any $x \leq m$ we have 
  \begin{align}
    \sigma'_{<x} = \sigma''_{<x}. \label{Eq:SeqSmall}
  \end{align}
  By Equations~\eqref{SubsetProp} and \eqref{EqualSubseq}, we have $D'' \subseteq W_{h'(D')} \subseteq W_{h(\sigma')} = W_{h(\sigma'_{<m})}$. As, by Equation~\eqref{Eq:SeqSmall}, $\sigma'_{<m} = \sigma''_{<m} \subseteq \sigma''$ and $h$ is $\tau(\SemConv)$, we get
  \begin{align}
    W_{h(\sigma')} = W_{h(\sigma'')}. \label{SigmaEquiv}
  \end{align}

  We conclude the proof by showing that $W_{h'(D')} = W_{h'(D'')}$. We check each direction separately by checking every possible position of an element, which is a candidate for enumeration, relative to the given information $D'$ and $D''$. 
  \begin{enumerate}
    \item[$\supseteq$:] Let $x \in W_{h'(D'')}$. For $x \in D''$ we have $x \in W_{h'(D')}$ by assumption. Otherwise, by Equations~\eqref{SubsetProp} and~\eqref{SigmaEquiv}, we get $x \in W_{h(\sigma')}$. Thus, $x$ will be considered in the enumeration of $W_{h'(D')}$. We distinguish the relation between $x$ and $d'$. 
    \begin{enumerate}
      \item[$x > d'$:] In this case $x \in (W_{h(\sigma')})_{>d'} \subseteq W_{h'(D')}$. 
      \item[$x < d'$:] As $d' \leq d''$ and since $x$ is enumerated into $W_{h'(D'')}$, we have $D'' \cup \{ x \} \subseteq W_{h(\sigma''_{<x})}$. We, again, distinguish the relative position of $x$ and $m$ and get
      \begin{enumerate}
        \item[$x < m$:] $D' \cup \{ x \} \subseteq D'' \cup \{ x \} \subseteq W_{h(\sigma''_{<x})} \overset{\eqref{Eq:SeqSmall}}{=} W_{h(\sigma'_{<x})}$,
        \item[$m < x < d'$:] $D' \cup \{ x \} \subseteq D'' \cup \{ x \} \subseteq W_{h(\sigma''_{<x})} \overset{(\ast)}{=} W_{h(\sigma'')} \overset{\eqref{SigmaEquiv}}{=} W_{h(\sigma')} \overset{\eqref{EqualSubseq}}{=} W_{h(\sigma'_{< m})} \overset{(\ast)}{=} W_{h(\sigma'_{<x})}.$
      \end{enumerate}
      We use $h$ being $\tau(\SemConv)$ in the steps marked by $(\ast)$. Thus, $x \in W_{h'(D')}$. 
    \end{enumerate}
    \item[$\subseteq$:] Let $x \in W_{h'(D')}$. For $x \in D''$ we have $x \in W_{h'(D'')}$ by definition of $h'$. Otherwise, note that
    \begin{align*}
      x \in D'' \cup \{ x \} \subseteq W_{h'(D')} \subseteq W_{h(\sigma')} \overset{\eqref{SigmaEquiv}}{=} W_{h(\sigma'')}.
    \end{align*}
    Thus, $x$ will be considered in the enumeration of $W_{h'(D'')}$. We now distinguish between the possible relation of $x$ and $d''$.
    \begin{enumerate}
      \item[$x > d''$:] In this case $x \in W_{h'(D'')}$ by definition of $h'$.
      \item[$x < d''$:] Here, we show that $D'' \cup \{x\} \subseteq W_{h(\sigma''_{<x})}$ is witnessed and, thus, $x$ is enumerated by $W_{h'(D'')}$. We distinguish the following cases.
        \begin{align*}
          x < m &: D'' \cup \{ x \} \subseteq W_{h(\sigma'_{<x})} \overset{\eqref{Eq:SeqSmall}}{=} W_{h(\sigma''_{<x})}, \\
          m < x < d' &: D'' \cup \{ x \} \subseteq W_{h(\sigma'_{<x})} \overset{(\ast)}{=} W_{h(\sigma'_{<m})} \overset{\eqref{Eq:SeqSmall}}{=} W_{h(\sigma''_{<m})} \overset{(\ast)}{=} W_{h(\sigma''_{<x})}, \\
          d' < x < d'' &: D'' \cup \{ x \} \subseteq W_{h(\sigma')} = W_{h(\sigma'_{<m})} = W_{h(\sigma''_{<m})} \overset{\eqref{Eq:SeqSmall}}{=} W_{h(\sigma''_{<x})}.
        \end{align*}
        We use $h$ being $\tau(\SemConv)$ in the steps marked by $(\ast)$. In the end, $x \in W_{h'(D'')}$. \qedhere
    \end{enumerate}
  \end{enumerate}\noqed
\end{proof}

Thus, we may assume $h$ to be globally semantically conservative set-driven. Lastly, by extending the result of \citet{KSS17}, who show that $\SemWb$- and $\SemConv$-learners are equally powerful for all considered interaction operators, to hold for the global counterpart as well, we see that $h$ remains equally powerful even when being globally semantically witness-based. The following result concludes the proof of Theorem~\ref{Thm:tauSemWbSd-GSemConv} and, therefore, also this section.

\begin{theorem}
  For $\beta \in \{ \G, \Psd, \Sd \}$, we have that\label{Thm:tau(Sem.)}
  $$[\tau(\SemWb)\Txt\beta\Bc] = [\tau(\SemConv)\Txt\beta\Bc].$$
\end{theorem}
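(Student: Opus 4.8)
The plan is to prove the two inclusions separately, with all the genuine work concentrated in one imported normal form.

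The inclusion $[\tau(\SemWb)\Txt\beta\Bc] \subseteq [\tau(\SemConv)\Txt\beta\Bc]$ I would get from a pointwise implication: for every learning sequence $p$ and every text $T$, $\SemWb(p,T)$ implies $\SemConv(p,T)$. Indeed, suppose $n < m$ and $\content(T[m]) \subseteq W_{p(n)}$, and assume towards a contradiction $W_{p(n)} \neq W_{p(m)}$. Instantiating the $\SemWb$ condition with $k = m$ (so the antecedent holds) yields some $x \in (\content(T[m]) \cap W_{p(m)}) \setminus W_{p(n)}$, contradicting $\content(T[m]) \subseteq W_{p(n)}$; hence $W_{p(n)} = W_{p(m)}$. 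Thus every $\tau(\SemWb)\Txt\beta\Bc$-learner is already a $\tau(\SemConv)\Txt\beta\Bc$-learner, which gives the inclusion.

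For the reverse inclusion the key observation I would isolate is that \emph{a globally consistent, globally semantically conservative $\beta$-learner is automatically globally semantically witness-based}. To verify this, fix a text $T$, write $p = \beta(h,T)$, and suppose $n \leq k \leq m$ with $W_{p(n)} \neq W_{p(k)}$ (so in fact $n < k \leq m$). First one shows $W_{p(n)} \neq W_{p(m)}$: otherwise consistency yields $\content(T[k]) \subseteq \content(T[m]) \subseteq W_{p(m)} = W_{p(n)}$, and $\SemConv$ applied to the pair $n < k$ forces $W_{p(n)} = W_{p(k)}$, a contradiction. Now $n < m$ with $W_{p(n)} \neq W_{p(m)}$, so the contrapositive of $\SemConv$ gives $x \in \content(T[m]) \setminus W_{p(n)}$, while consistency gives $x \in \content(T[m]) \subseteq W_{p(m)}$; thus $x \in (\content(T[m]) \cap W_{p(m)}) \setminus W_{p(n)}$, which is exactly what $\SemWb$ demands. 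Given this, it suffices to replace a learner witnessing $\La \in [\tau(\SemConv)\Txt\beta\Bc]$ by a globally consistent one of the same interaction type. For $\beta = \G$ this is precisely the global-consistency normal form for semantically conservative learners of \citet{KSS17} (the same one already used in the proof of Theorem~\ref{thm:SemConv}). For $\beta \in \{\Psd, \Sd\}$ I would route through Theorem~\ref{thm:SemConv}: since $[\tau(\SemConv)\Txt\beta\Bc] = [\tau(\SemConv)\Txt\G\Bc]$, start from a globally consistent, globally semantically conservative $\G$-learner for $\La$ and feed it into the $\G\to\Sd$ construction from the proof of Theorem~\ref{thm:SemConv}; the resulting learner $h'$ satisfies $D \subseteq W_{h'(D)}$ by construction, so it is globally consistent, globally semantically conservative, and $\Txt\Sd\Bc$-learns $\La$ (hence also $\Txt\Psd\Bc$-learns $\La$, every set-driven learner being partially set-driven). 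Applying the key observation to $h'$ yields $\La \in [\tau(\SemWb)\Txt\beta\Bc]$.

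The real obstacle is entirely the imported consistency normal form, and it is essential: without consistency the implication $\SemConv \Rightarrow \SemWb$ genuinely fails, since a learner whose successive guesses are chosen disjoint from everything it will ever read is vacuously globally semantically conservative yet violates $\SemWb$ at its very first semantic mind change. Once global consistency is available, the two short computations above — one per inclusion — close the argument uniformly in $\beta \in \{\G, \Psd, \Sd\}$.
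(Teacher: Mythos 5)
Your proof is correct and takes essentially the same route as the paper: the paper's own argument also reduces everything to consistency, citing \citet{KSS17} both for the (lifted-to-global) consistency normal form and for $\Cons \cap \SemWb = \Cons \cap \SemConv$, which are exactly the two pointwise implications you verify directly. Your only deviation is organisational — for $\beta \in \{\Psd,\Sd\}$ you obtain global consistency by routing through the construction of Theorem~\ref{thm:SemConv} (which satisfies $D \subseteq W_{h'(D)}$), whereas the paper simply asserts that the \citet{KSS17} consistency normal form holds globally for every $\beta$.
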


\begin{proof}
  \citet[Thm. 8]{KSS17} show that $\SemWb$ and $\SemConv$ allow for consistent $\Bc$-learning. This also holds true when the restrictions are global, thus, $[\tau(\Cons\SemWb)\Txt\beta\Bc] = [\tau(\SemWb)\Txt\beta\Bc]$ and $[\tau(\Cons\SemConv)\Txt\beta\Bc] = [\tau(\SemConv)\Txt\beta\Bc]$. Since $\Cons \cap \SemWb = \Cons \cap \SemConv$, as shown by \citet[Lem. 11]{KSS17}, the theorem holds.
\end{proof}

\section{Conclusion and Future Work}\label{Sec:Concl}

In this work, we study the behaviour of witness-based learners in different settings. Being a specialization to important restrictions within inductive inference, first studies of witness-based learners have been provided by \citet{KS16} and \citet{KSS17}. With this work, we provide a thorough investigation of the behaviour of these learners and provide \emph{normal forms} thereof. In particular, we provide a general framework with which we obtain results for witness-based explanatory learners with multiple additional restrictions. Most notably, we show that globally witness-based $\Psd$-learners are equally powerful as target-cautious $\G$-learners. Furthermore, we provide results in the behaviourally correct case, showing that globally semantically witness-based set-driven learners are equally powerful as semantically conservative full-information learners. The latter result is vital in order to obtain a \emph{full map for delayable restrictions in the behaviourally correct} case, which is left as future work.

\acks{%
  This work was supported by DFG Grant Number KO 4635/1-1. 
}

\bibliography{LTbib}

\end{document}